\newtheorem{theorem}{Theorem}[section]
\newtheorem{lemma}[theorem]{Lemma}
\newtheorem{corollary}[theorem]{Corollary}
\newtheorem{assumption}[theorem]{Assumption}
\newcommand{\R}{\mathbb{R}}
\title{Spatial Conformal Inference through Localized Quantile Regression}
\author{Hanyang Jiang, Yao Xie}
\date{Dec 2023}
\begin{document}

\maketitle

\begin{abstract}
Reliable uncertainty quantification at unobserved spatial locations, particularly for complex and heterogeneous datasets, is a key challenge in spatial statistics. Traditional methods like Kriging rely on assumptions such as normality, which often fail in large-scale datasets, leading to unreliable intervals. Machine learning methods focus on point predictions but lack robust uncertainty quantification. Conformal prediction provides distribution-free prediction intervals but often assumes i.i.d. data, which is unrealistic for spatial settings. We propose Localized Spatial Conformal Prediction (LSCP), designed specifically for spatial data. LSCP uses localized quantile regression and avoids i.i.d. assumptions, instead relying on stationarity and spatial mixing, with finite-sample and asymptotic conditional coverage guarantees. Experiments on synthetic and real-world data show that LSCP achieves accurate coverage with tighter, more consistent prediction intervals than existing methods.
\end{abstract}

\section{Introduction}

Quantifying uncertainty at unobserved spatial locations has been a longstanding challenge in spatial statistics, particularly in practical applications such as weather forecasting \citep{siddique2022survey} and mobile signal coverage estimation \citep{jiang2024learning}. Traditional methods like Kriging rely on strong parametric assumptions, including normality and stationarity, to model spatial relationships and quantify uncertainty \citep{cressie2015statistics}. However, the failure of these assumptions in the complex spatial datasets \citep{heaton2019case} results in unreliable uncertainty quantification. The issue is especially pronounced when constructing prediction intervals, as deviations from stationarity or Gaussianity can critically undermine their validity \citep{fuglstad2015does}.

While many methods have been developed to handle heterogeneity \citep{gelfand2005bayesian, duan2007generalized}, these approaches are often computationally expensive and cannot scale effectively for massive datasets. Furthermore, fully modeling the underlying process is not always necessary, particularly when the goal is to produce reliable prediction intervals. Recently, machine learning approaches have offered alternative strategies for spatial prediction \citep{hengl2018random, chen2020deepkriging}, though they tend to focus on point predictions and often lack rigorous uncertainty quantification.

Conformal prediction, introduced by \citep{vovk2005algorithmic}, provides a powerful, distribution-free approach to uncertainty quantification. Its ability to generate valid prediction sets without assumption on the underlying data distribution and the prediction model has gained widespread popularity in both machine learning and statistics \citep{lei2014distribution, angelopoulos2023conformal}. By leveraging only the exchangeability of data, conformal prediction ensures valid coverage at any significance level, making it highly attractive for scenarios where a black-box model is used, or traditional parametric assumptions may fail.

However, in many real-world datasets---such as time-series data---the assumption of exchangeability does not hold. To address this, recent work has extended conformal prediction to handle non-exchangeable data. For instance, \citep{tibshirani2019conformal} introduced weighted quantiles to maintain valid coverage in the presence of distributional shifts between training and test sets by leveraging the likelihood ratio between distributions. More recently, \citep{barber2023conformal} tackled the challenge of distribution shifts by bounding the coverage gap using the total variation distance, although the issue of optimizing these weights remains open. For time-series data, further improvements have been made in tightening prediction intervals and building theoretical guarantees, as demonstrated by \citep{xu2023sequential, xu2024conformal}.

In this paper, we extend conformal prediction to spatial data, where the assumption of exchangeability rarely holds. While time-series data can be viewed as a special case of spatial data defined in a one-dimensional time domain, spatial data is inherently multidimensional and poses unique challenges. For example, while time indices are typically discrete and naturally ordered, spatial locations are continuous and lack intrinsic ordering. Despite the prevalence of spatial data in real-world applications, there has been limited work on conformal prediction methods tailored to this context. To address this gap, we propose Localized Spatial Conformal Prediction (LSCP), a novel conformal prediction method that employs localized quantile regression for constructing prediction intervals. Our method and theoretical framework can also be extended to spatio-temporal settings, broadening its applicability. Here is a revised version of the summary:
\begin{itemize}
\item {\it Localized Spatial Conformal Prediction (LSCP)}: We introduce LSCP, a conformal prediction algorithm specifically designed for spatial data, which utilizes localized quantile regression to construct prediction intervals.  
\item {\it Theoretical guarantees:} We establish a finite-sample bound for the coverage gap and provide asymptotic convergence guarantees for LSCP, without requiring the exchangeability of the data.  
\item {\it Numerical Evaluation:} We extensively evaluate LSCP against state-of-the-art conformal prediction methods using both synthetic and real-world datasets. The results highlight LSCP's ability to achieve tighter prediction intervals with valid coverage and more consistent performance across the spatial domain. \end{itemize}


\subsection{Literature}

\paragraph{\it Conformal prediction beyond exchangeability.}
Traditional conformal prediction relies on the assumption of i.i.d. or exchangeability, which is often difficult to satisfy in real-world datasets. Recent research has focused on extending methods and theoretical frameworks to non-exchangeable data to broaden applicability. A prominent approach is weighted conformal prediction, which assigns greater importance to samples deemed more "reliable." The work from \citep{tibshirani2019conformal} explored the covariate shift scenario, where the distribution of features $X$ differs between calibration and test data, while the conditional distribution $Y|X$ remains unchanged. They demonstrated that weighting samples by the ratio of the distributions restores exchangeability. Building on this, \citep{barber2023conformal} proposed a general weighted conformal framework and provided an analysis of the coverage gap in the general non-exchangeable setting, offering insights into the relationship between the weights and the coverage gap.

\paragraph{\it Conformal prediction for time series.}
One key application of conformal prediction in non-exchangeable settings is time-series data. Some studies \citep{gibbs2021adaptive, zaffran2022adaptive} have focused on adaptively adjusting the significance level $\alpha$ over time to achieve valid coverage. Another work \citep{angelopoulos2024conformal}, built upon the idea of control theory, prospectively model the non-conformity scores in an online setting. Another line of research \citep{tibshirani2019conformal, xu2024conformal} follows the concept of weighted conformal prediction by assigning higher weights to more recent data points. The choice of the weights plays a critical role in determining the empirical performance of the method. However, as noted by \citep{barber2023conformal}, no universally optimal weighting strategy has been found, leaving room for further exploration and optimization in this area. Without exchangeability in the time series setting, the finite-sample guarantee does not necessarily hold, and asymptotic coverage can be achieved instead with certain additional assumptions. 

\paragraph{\it Conformal prediction for spatial data.} The spatial setting represents a broader and more complex domain compared to time-series data, yet research on conformal prediction for spatial contexts remains limited. A recent study by \citep{mao2024valid} introduced a spatial conformal prediction method under the infill sampling framework, where data density increases within a bounded region. The key finding is that for spatial data, the exact or asymptotic exchangeability holds in certain settings. The algorithm falls within the category of weighted conformal prediction, employing kernel functions such as the radial basis function (RBF) kernel as weights. Similarly, \citep{guan2023localized} proposed a general localized framework for conformal prediction. While not explicitly designed for spatial data, their method also advocates using kernel functions for weighting, highlighting its potential relevance in spatial applications.

\section{Problem setup}
 In this paper, we consider a spatial setting with observations \(\{Z(s_i)\}_{i=1}^n\), where \(Z(s)= (X(s_i), Y(s_i))\) represents a random field observed at a finite set of spatial locations \(s_i\). Here, \(Y(s) \in \mathbb{R}\) denotes the response variable, and \(X(s) \in \mathbb{R}^p\) is the associated feature vector. The feature vector \(X(s)\) can include any relevant information that is useful for predicting \(Y(s)\), such as spatial location $s$. Unlike the time series setting, where observations are indexed by fixed times \(t\), the spatial setting considers the locations \(s\) as random variables sampled from a distribution \(g(s)\), while allowing for potential dependency within the random field \(Z(s)\).

In the context of conformal prediction, the objective is to construct a prediction region \(\hat{C}_n(X(s_{n+1}))\) for an unobserved response \(Y(s_{n+1})\) given a known feature vector \(X(s_{n+1})\). For a user-specified confidence level \(\alpha\), we aim to ensure that the probability of \(Y(s_{n+1})\) falling within the prediction region exceeds \(1 - \alpha\). This notion of coverage can be interpreted in two ways: marginal coverage and conditional coverage. 
\emph{Marginal coverage} is defined as
\[
\mathbb{P}(Y(s_{n+1}) \in \hat{C}_n(X(s_{n+1}))) \geq 1 - \alpha,
\]
whereas \emph{conditional coverage} requires that
\[
\mathbb{P}(Y(s_{n+1}) \in \hat{C}_n(X(s_{n+1})) \mid X(s_{n+1})) \geq 1 - \alpha.
\]
Conditional coverage is a stronger condition than marginal coverage as it requires valid coverage for different $X(s)$. However, as shown by \citep{foygel2021limits}, achieving conditional coverage universally is impossible without making additional assumptions about the data distribution. In traditional conformal prediction settings, where data points are assumed to be i.i.d. or exchangeable, only marginal coverage is typically guaranteed. Besides valid coverage, constructing a prediction region that is as narrow as possible is desirable to improve the empirical performance of conformal prediction. The whole space is an example of a trivial prediction set, but it is too large to be useful and informative.

\section{Method}
\subsection{Background: Conformal prediction}
conformal prediction is a widely used technique for constructing prediction intervals with finite-sample guarantees under minimal assumptions. The method is designed to provide a prediction interval for a response variable \( Y_{n+1} \) associated with a new feature vector \( X_{n+1} \), given a predictive model \( \hat{f} \) and a set of observations \( \{(X_i, Y_i)\}_{i=1}^n \). The data is divided into two disjoint sets: a training set used to fit the model \( \hat{f} \), and a calibration set used to calculate non-conformity scores, which quantify the uncertainty of the model at each point. A common choice for non-conformity score is \( \hat{\varepsilon}_i = |Y_i - \hat{f}(X_i)| \), which measures how well the model prediction aligns with the observed response. There is no restriction on the choice of the non-conformity score.

To construct a prediction interval for \( Y_{n+1} \), the empirical quantile of the non-conformity scores $\hat{Q}_n$ from the calibration set is used as the estimate. Specifically, for a user-specified confidence level \( 1 - \alpha \), the prediction interval is defined as
\[
\hat{C}_{n}(X_{n+1}) = \left[ \hat{f}(X_{n+1}) - \hat{Q}_n(1-\alpha), \; \hat{f}(X_{n+1}) + \hat{Q}_n(1 - \alpha) \right].
\]
 The empirical quantile can be written explicitly as
 \[
 \hat{Q}_n(p) = \inf\{e\in\mathbb{R}:\sum_{i=1}^n \frac{1}{n}\mathbf{1}\{\hat{\varepsilon}(s_i)\le e\}\le p\}.
 \]
 Utilizing the exchangeability of the data, the method provides a prediction interval with valid marginal coverage, ensuring that \( Y_{n+1} \) falls within the interval with probability at least \( 1 - \alpha \). Besides the strong theoretical guarantee, conformal prediction is appealing due to its flexibility in the prediction model, as it has no assumptions about the underlying distribution of \( Y \) or the form of the model \( \hat{f} \).

Classical geostatistical methods like Gaussian Process (GP) assume that any finite set of observations has a joint Gaussian distribution. Given data, it provides both mean prediction and uncertainty estimates for unseen points, making it suitable for regression tasks where probabilistic predictions are desirable. In contrast to conformal prediction, which constructs prediction intervals by assessing model residuals without assumptions on data distribution, GPs rely on a Gaussian prior and explicit covariance structure. While conformal prediction provides finite-sample coverage guarantees under minimal assumptions, GPs often lose coverage when data does not satisfy the Gaussian assumption. Furthermore, it is computationally expensive to learn a GP model, which makes it difficult for GP to utilize as much data as possible. This also weakens the performance of GP. On the contrary, conformal prediction is computationally efficient and can be used with any prediction model.

\begin{figure}[t]
    \centering
    \includegraphics[width=0.8\linewidth]{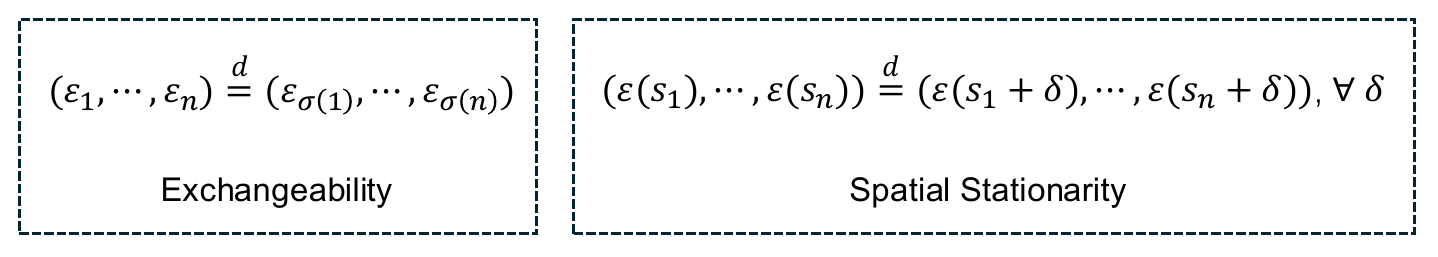}
    \caption{Difference between exchangeability and spatial stationarity.}
    \label{fig:stationarity}
\end{figure}

\subsection{Proposed method: Local spatial conformal prediction (LSCP)}
In spatial settings, data often exhibit significant dependence across locations, and taking the spatial dependence into account can improve the accuracy of prediction intervals. To account for this, it is advantageous to base predictions on nearby data points, as spatially proximate observations are likely to share similar distributions and therefore provide more reliable information. The recent study by \citep{barber2023conformal} highlights the importance of weighting calibration data differently, depending on their relevance to the target prediction point.

To construct the prediction interval, we first split the dataset into a training set and a calibration set. The training set is used to train the prediction model $\hat{f}$, while the calibration set decides the width of the interval. We assume the calibration set consists of observations $(X(s_1), Y(s_1)), \dots, (X(s_n), Y(s_n))$. For a new observation $(X(s_{n+1}), Y(s_{n+1}))$, the aim is to construct a prediction interval by selecting a neighborhood of data from the calibration data. Here we use $N(s_{n+1})$ to represent all the indices $i$ so that $s_i$ is in the neighborhood of $s_{n+1}$. The neighborhood can be determined via various criteria, and a common approach is to include all nearby points located within a specified distance threshold. In the paper, we use $k$-nearest neighbor for simplicity.

Given the trained model $\hat{f}$, the non-conformity scores are defined as $\hat{\varepsilon}(s_i) = Y(s_i) - \hat{f}(X(s_i))$. We then define the conditional cumulative distribution function (CDF) for the non-conformity scores based on the selected neighbors, denoted by 
\[
F(e | \{\hat{\varepsilon}(s_i)\}_{i\in N(s_{n+1})}) = \mathbb{P}(\hat{\varepsilon}(s_{n+1}) \leq e | \{\hat{\varepsilon}(s_i)\}_{i\in N(s_{n+1})}).
\]
The conditional quantile \( Q_n(p) \) is defined as
\[
Q_n(p) = \inf \{ e \in \mathbb{R} : F(e | \{\hat{\varepsilon}(s_i)\}_{i\in N(s_{n+1})}) \geq p \}.
\]
The empirical quantile places an equal weight on all the points, which may not fully capture the dependence structure. Instead, we apply a quantile regression estimator \( \widehat{Q}_n \) on the residuals \( \{\hat{\varepsilon}(s_i)\}_{i\in N(s_{n+1})} \). The estimator, \( \widehat{Q}_n(\alpha) \), predicts the \(\alpha\)-quantile of the residual \( \hat{\varepsilon}(s_{n+1}) \) given the values of its neighbors. For computational efficiency, we use Quantile Random Forests from \citep{meinshausen2006quantile}, although other quantile regression techniques could also be applied. The resulting prediction interval is:
\begin{equation*}
\begin{aligned}
\hat{C}_{n}(X(s_{n+1})) = [ \hat{f}(X(s_{n+1})) + \widehat{Q}_{n}(\beta^*), \; \hat{f}(X(s_{n+1})) + \widehat{Q}_{n}(1 - \alpha + \beta^*) ],      
\end{aligned}  
\end{equation*}
where \( \beta^* = \operatorname{argmin}_{\beta \in [0, \alpha]}(\widehat{Q}_{n}(1 - \alpha + \beta) - \widehat{Q}_{n}(\beta)) \). Here $\beta$ is optimized to find the tightest interval.

To formalize this approach, let \( \tilde{X}(s_{i}) = (\hat{\varepsilon}(s_j))_{j\in N(s_{i})} \) and \( \tilde{Y}(s_i) = \hat{\varepsilon}(s_i) \). The quantile regression, which learns the conditional quantile of \( F(\tilde{Y}(s_{n+1}) | \tilde{X}(s_{n+1})) \), computes:
\[
\widehat{Q}_n(p) = \inf \{ e \in \mathbb{R} : \sum_{i\in N(s_{n+1})} \omega_i \mathbf{1}\{\tilde{Y}(s_i) \leq e\} \leq p \},
\]
where the weights \( \omega_i \) are learned through quantile regression.

\begin{figure}[t]
    \centering
    \includegraphics[width=0.7\linewidth]{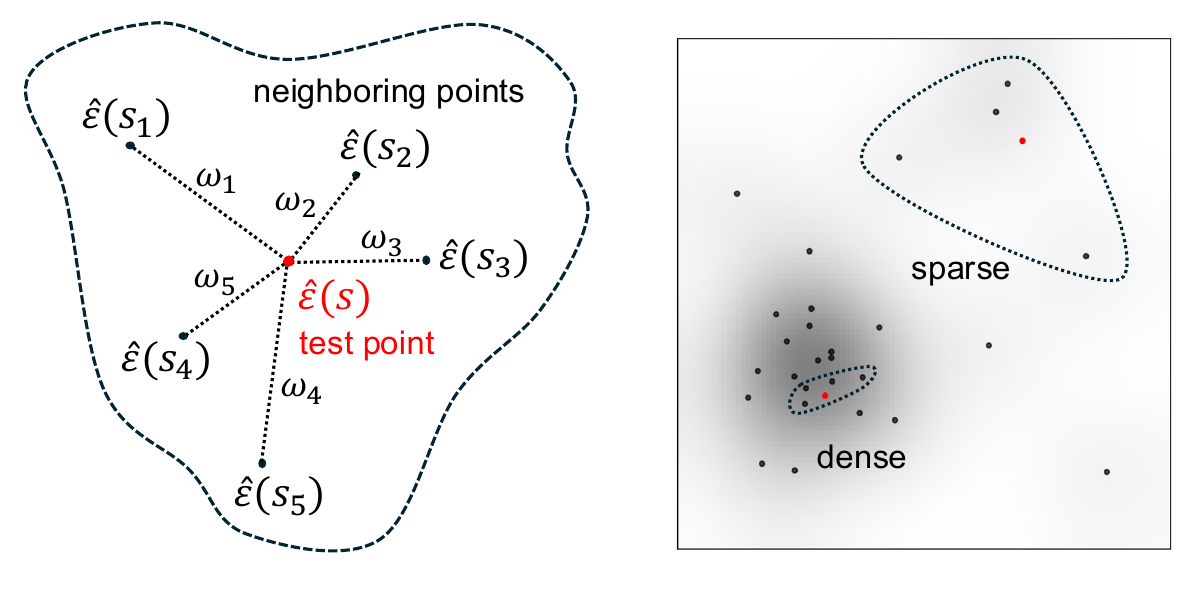}
    \caption{
Illustration of Localized Spatial Conformal Prediction (LSCP). Left: $k$-nearest neighbors with weights predict test points and quantify uncertainty. Right: Neighborhoods adapt to dense and sparse regions. Red points mark test locations, dashed circles show 5-nearest neighbors, and darker grey indicates lower uncertainty. LSCP uses neighbors to construct prediction intervals.}
    \label{fig:WQR}
\end{figure}

\subsection{Comparison with related methods}
\subsubsection{Global Spatial Conformal Prediction}
Global Spatial Conformal Prediction (GSCP), introduced in \citep{mao2024valid}, applies equal weighting to all non-conformity scores across the calibration dataset, where the non-conformity score is defined as
\[
\hat{\varepsilon}(s_i) = \left|\frac{Y_i - \hat{f}(s_i, X(s_i))}{\hat{\sigma}(s_i, X(s_i))}\right|.
\]
The estimated quantile at any point is given by
\begin{equation}
\label{eq_gscp}
\hat{Q}_n(p) = \inf\{e \in \mathbb{R} : \frac{1}{n} \sum_{i=1}^n  \mathbf{1}\{\hat{\varepsilon}(s_i) \le e\} \le p\}.
\end{equation}

The method can be viewed as an extension of split conformal prediction. Its primary advantage is the strong theoretical guarantee, achieved with minimal assumptions. When the spatial locations are sampled i.i.d. from a common distribution \( F_\varepsilon(s) \), the data becomes exchangeable because switching the order does not change the joint probability. The marginal coverage automatically holds in this situation.

However, in real-world applications, the performance of global conformal methods like GSCP may be limited because data distributions can vary across locations. Unfortunately, the exchangeability does not hold for localized methods. Since GSCP lacks adaptivity to different locations, it tends to construct intervals that may be overly conservative to meet the marginal coverage requirement, leading to overcoverage in some regions.

\begin{algorithm}[!t]
\caption{Spatial Conformal Prediction}
\label{alg1}
\begin{algorithmic}[1] 
\Require Dataset $\left\{\left(x(s_i), y(s_i)\right)\right\}_{i=1}^n$, prediction algorithm $\mathcal{A}$, significance level $\alpha$
\Ensure{Prediction intervals $\{\widehat{C}_{n}\left(x(s_{n+1})\right)\}$}
\State Split the dataset into training data and calibration data.
\State Train the prediction model $\hat{f}$ with the training data using the prediction algorithm $\mathcal{A}$.
\State Select the neighborhood of $s_{n+1}$ in the calibration data, denoted as $N(s_{n+1})$.
\State Compute the non-conformity scores $\hat{\varepsilon}(s_i) = Y(s_i) - \hat{f}(X(s_i))$ for all data in the calibration set.
\State Set $\tilde{Y}(s_i) = \hat{\varepsilon}(s_i)$ and\\ $\tilde{X}(s_i) = (\hat{\varepsilon}(s_{j_1}), \dots, \hat{\varepsilon}(s_{j_{|N(s_i)|}}))$, where $s_{j} \in N(s_i)$.
\State Fit quantile regression $\widehat{Q}_n$ with all pairs $(\tilde{X}(s_i), \tilde{Y}(s_i))$ in the calibration data.
\State Obtain the prediction interval $\hat{C}_{n}(X(s_{n+1}))$.
\end{algorithmic}
\end{algorithm}

\subsubsection{Smoothed Local Spatial CP}
Smoothed Local Spatial Conformal Prediction (SLSCP) \citep{mao2024valid} improves upon previous GSCP by utilizing only nearby data to construct prediction intervals. The non-conformity scores are defined as in Equation \ref{eq_gscp}, but the quantile is estimated locally:
\begin{equation}
\label{eq_SLSCP}
\hat{Q}_n(p) = \inf \{ e \in \mathbb{R} : \sum_{i \in N(s_{n+1})} \omega_i \mathbf{1}\{\tilde{Y}(s_i) \leq e\} \leq p \}.
\end{equation}
A key difference between our method and SLSCP is in the choice of weights $\omega_i$. In SLSCP, $\omega_i \propto k(\|s_i - s_{n+1}\|)$, where $k$ is a kernel function that depends solely on the distance between spatial locations. In contrast, our method learns $\omega_i$ through quantile regression based on features $\tilde{X}(s)$. This feature vector can contain additional information beyond spatial distance, allowing the learned weights to be more expressive and adaptive than a given kernel function. The numerical results in Section \ref{num_result} demonstrate that our method significantly outperforms SLSCP in experiments.

Another key distinction lies in the theoretical foundations. Conditional on the location \( s_{n+1} \), the asymptotic coverage of SLSCP depends on an infill sampling assumption, where data points become infinitely dense around \( s_{n+1} \), requiring infinitely close data. Additionally, SLSCP assumes the data process to be a composition of an \( L_2 \)-continuous spatial process and a locally i.i.d. noise process \( \varepsilon(s) \). Under these assumptions, the process can be shown to be locally asymptotically exchangeable, achieving asymptotic coverage as data points accumulate around the test location.

In contrast, our method establishes a finite-sample bound for the conditional coverage gap of LSCP, which guarantees asymptotic coverage. A key distinction is that our result conditions on the feature \( X(s_{n+1}) \), rather than solely on the location \( s_{n+1} \), making it more general and broadly applicable. We assume an additive data model with a stationary, spatially mixing error process rather than an i.i.d. process. Spatial mixing implies that dependence between data points diminishes with increasing distance. Instead of requiring infinitely close neighbors, we assume that the average dependence decreases as more neighbors are included. The assumption is reasonable in that, given a fixed dataset, selecting more neighbors for prediction increases the neighborhood size, leading to a natural decay in correlation as more distant neighbors are included. Furthermore, our setting can be extended to spatio-temporal settings, which are commonly encountered in real-world applications. In contrast, SLSCP cannot generalize to time-series data or similar scenarios, as the time index cannot become infinitely dense, as required by the infill sampling assumption.

\subsubsection{Localized Conformal Prediction}
Localized Conformal Prediction (LCP), introduced in \citep{guan2023localized}, provides a general framework for localized conformal prediction rather than the spatial setting only. The method combines GSCP and SLSCP in quantile estimation:
\begin{equation}
\label{eq_LCP}
\hat{Q}_n(p) = \inf \{ e \in \mathbb{R} : \sum_{i=1}^n \omega_i \mathbf{1}\{\tilde{Y}(s_i) \leq e\} \leq p \}.
\end{equation}
Similar to GSCP, LCP uses all calibration data for prediction, but like SLSCP, it applies weights to each data point. The weights are defined as $\omega_i \propto k(X(s_i), X(s_{n+1}))$, where $k$ is a user-specified kernel function measuring similarity between features. The choice allows for more flexibility than the location-based weights in SLSCP, as feature-based weights can capture more detailed information. However, LCP still relies on user-specified weights, meaning the chosen kernel function might not fully capture the dependence structure in the data.

In terms of theoretical assumptions, LCP assumes that the data $\{(X_i, Y_i)\}_{i=1}^{n}$ are i.i.d., which ensures finite-sample marginal coverage. The assumption is stronger than all the other methods, which hinders the generality of the result.

\begin{table*}[!t]
\caption{Comparison of assumptions and algorithms of three related localized conformal prediction methods.}\label{CP_table}
\vspace{0.1in}
\resizebox{\linewidth}{!}{
\begin{tabular}{c c c c}
\toprule
 & LSCP (Ours) & SLSCP \citep{mao2024valid} & LCP \citep{guan2023localized} \\\midrule
Algorithm & localized weighted quantile & localized weighted quantile &
global weighted quantile \\ \midrule
Weights  & learned by quantile regression & parametric kernel  for location $s$
& parametric kernel for feature $X(s)$ \\ \midrule
\makecell[c]{Distributional \\assumption} & \makecell[c]{stationary and spatial mixing\\ noise $\varepsilon(s)$}  & locally i.i.d. noise $\varepsilon(s)$ & globally i.i.d. data $(X(s),Y(s))$\\
\midrule
Data model & additive noise &\makecell[c]{continuous mapping from  $L_2$-continuous\\ spatial and noise processes} &  no assumption\\
\bottomrule
\end{tabular}
}
\vspace{-0.1in}
\end{table*}

\section{Theoretical results}
\subsection{Setting}
Suppose the data is denoted by $\{Z(s_i)\}_{i=1}^n$, where $Z(s)=(X(s), Y(s))$, $s\in\R^d$ denotes the spatial location, $X(s)\in\R^d$ is the feature vector and $Y(s)\in \R$ represents the univariate response. We assume that $Y(s)$ is generated from a true model with unknown additive noise:
\[
Y(s)=f(X(s))+\varepsilon(s), 
\]
where $f$ is an unknown function and $\varepsilon(s)$ represents the noise process, whose marginal distribution is not necessarily Gaussian. Given a pre-trained prediction model $\hat{f}$, we can compute the non-conformity scores \[\hat{\varepsilon}(s) = Y(s) - \hat{f}(X(s)).\]The estimated conditional distribution function $\widehat{F}(\varepsilon|x)$ is defined as \[\widehat{F}(\varepsilon|x)=\sum_{i=1}^{n}w_t(x)\mathbf{1}(\hat{\varepsilon}({s_i})\le \varepsilon).\] Besides, we define the weighted empirical CDF as \[\widetilde{F}(\varepsilon|x)=\sum_{i=1}^{n}w_t(x)\mathbf{1}(\varepsilon({s_i})\le \varepsilon).\]

\subsection{Preliminary}
\paragraph{\it Stochastic Design: Random observation location.}
The key distinction between time series and spatial data lies in the indexing of observations. For time series data, observations are indexed by a fixed, ordered sequence of time points, denoted as $(X_t, Y_t)$. This inherent ordering imposes a natural temporal structure, preventing data points from being exchangeable. In contrast, spatial data are indexed by locations that can be irregularly distributed across space, with each data point in a random field represented as $(X(s), Y(s))$, where $s$ denotes a spatial location. In stochastic design, the spatial locations $s$ are considered random, following some underlying distribution. If we further assume that the locations of both calibration and test data are independently sampled from the same underlying distribution, then exchangeability holds naturally. In this case, marginal coverage is guaranteed for global conformal prediction that constructs prediction region with the whole calibration dataset, as established in \citep{mao2024valid}. However, this does not apply in time series contexts due to the fixed temporal order, making it impossible to freely exchange data points. In a spatial setting, local conformal methods violate this exchangeability assumption because they apply different weights or restrictions based on proximity. As a result, while global conformal methods achieve exchangeability and marginal coverage, local conformal methods require additional conditions to maintain validity. Interestingly, we can consider time series as a special case of spatial data by restricting the spatial domain to a single dimension and treating time as a scalar spatial index. In this case, each time point can be thought of as a distinct “location” on a one-dimensional grid, with the locations ordered sequentially. This perspective highlights that time series analysis is a subset of spatial analysis, albeit with the added constraint of temporal ordering.

\paragraph{\it Spatial mixing.}
Next, we define the strong mixing co-efficient for random field $Z(\cdot)$. Let $\mathcal{F}_Z(T)=\sigma\langle Z(s): s \in T\rangle$ be the $\sigma$-field generated by the variables $\{Z(s)$ : $s \in T\}, T \subset \mathbb{R}^d$. For any two subsets $T_1$ and $T_2$ of $\mathbb{R}^d$, let $\tilde{\alpha}\left(T_1, T_2\right)=$ $\sup \left\{|\mathbb{P}(A \cap B)-\mathbb{P}A \mathbb{P}B|: A \in \mathcal{F}_Z\left(T_1\right), \quad B \in \mathcal{F}_Z\left(T_2\right)\right\}$, and let $d\left(T_1, T_2\right)=\inf \left\{|x-s|: x \in T_1, \quad s \in T_2\right\}$. For $d=1$, we define the strong mixing co-efficient as
$$
\alpha(a ; b)=\sup \{\tilde{\alpha}((x-b, x],[y, y+b)):-\infty<x+a<y<\infty\}, \quad a>0, b>0 \text {. }
$$

Thus, $\alpha(a ; \infty)$ corresponds to the standard strong mixing co-efficient commonly used in the time series case. To define the strong mixing co-efficient for $d \geq 2$, let $\mathcal{R}_k(b) \equiv\left\{\cup_{i=1}^k D_i: \sum_{i=1}^k\left|D_i\right| \leq b\right\}$ be the collection of all disjoint unions of $k$ cubes $D_1, \ldots, D_k$ in $\mathbb{R}^d, k \geq 1, b>0$. Following the definition in \citep{lahiri2003central}, the strong-mixing coefficient for the r.f. $Z(\cdot)$ for $d \geq 2$ is defined as
$$
\alpha(a ; b)=\sup \left\{\tilde{\alpha}\left(T_1, T_2\right): d\left(T_1, T_2\right) \geq a, T_1, T_2 \in \mathcal{R}_3(b)\right\}.
$$

To simplify the exposition, we further assume that there exists a nonincreasing function $\alpha_1(\cdot)$ with $\lim _{a \rightarrow \infty} \alpha_1(a)=0$ and a nondecreasing function $g(\cdot)$ such that the strong-mixing coefficients $\alpha(a, b)$ satisfies the inequality
\begin{equation}
\alpha(a, b) \leq \alpha_1(a) g(b), \quad a>0, b>0,
\end{equation}
where the function $g(\cdot)$ is bounded for $d=1$ but may be unbounded for $d \geq 2$. Without loss of generality, we may assume that $\alpha_1(\cdot)$ is left continuous and $g(\cdot)$ is right continuous (otherwise, replace $\alpha_1(a)$ by $\alpha_1(a-) \geq \alpha_1(a)$ and $g(b)$ by $g(b+) \geq g(b))$. We shall specify exact conditions on the rate of decay of $\alpha_1(\cdot)$ and the growth rate of $g(\cdot)$ in the statements of the results below.

\subsection{Assumptions}
With the data $(X(s_1),Y(s_1)),\cdots,(X(s_n),Y(s_n))$, we would like to construct a prediction region for $Y(s_{n+1})$ where only the feature $X(s_{n+1})$ is known. Define $\omega_{n}(X(s_i))$ to be the normalized weight over samples $s_1,\cdots,s_n$ so that $\sum_{i=1}^{n}\omega_{n}(X(s_i))=1$. The weight function can be some function that measures the similarity between $X(s)$ and $X(s_{n+1})$.

\begin{assumption}[Weight decay]
\label{aw}
There exists $\gamma>0$ so that the normalized weights satisfy
\begin{equation}
\omega_{n}(X(s_i))=o(n^{-\frac{1+\gamma}{2}}),
\end{equation}
for all $i$; meaning that $M_n=\max_{1\le i\le n}\omega_{n}(X(s_i))=o(n^{-\frac{1+\gamma}{2}})$.
\end{assumption}

The requirement assumes that the normalized weights decay at a rate faster than $n^{-\frac{1}{2}}$. As we can see, $\omega_{n}=\frac{1}{n}$ is a special case that satisfies this condition. The condition is also weaker than the requirement of $\omega_{n}=O(\frac{1}{n})$ in a related study \citep{xu2024conformal}. Besides, the assumption can also be inferred from that of SLSCP \citep{mao2024valid} where a GBF kernel with infinitely close data leads to uniform weights.

\begin{assumption}[Estimation quality]
\label{ae}
There exists a sequence $\{\delta_n\}_{n\ge 1}$ such that
{
  \setlength{\abovedisplayskip}{5pt}
  \setlength{\belowdisplayskip}{5pt}
\begin{equation}
\begin{split}
&\sum_{i=1}^{n}\|\hat{\varepsilon}(s_i)-\varepsilon(s_i)\|^2 \le \frac{\delta_{n}^{2}}{M_n}, \\ 
&\|\hat{\varepsilon}(s_{n+1})-\varepsilon(s_{n+1})\| \le \delta_n.
\end{split}
\end{equation}}
\end{assumption}

The assumption requires that the average prediction error be bounded by a term $\delta_n^2$, which is a weaker requirement than the similar condition for bounding the average prediction error in \citep{xu2021conformal}. The reason lies in that $M_n$ is allowed to decay at a slower rate than $\frac{1}{n}$. Notably, our result of the coverage gap does not require $\delta_n$ to converge to zero. However, there are many cases where $\delta_n$ does indeed approach zero. For instance, extensive research has investigated the prediction error of neural networks. Under certain regularization conditions on $f$, \citep{barron1994approximation} shows that $\delta_n = O\left(\frac{1}{\sqrt{n}}\right)$.

\begin{assumption}[Stationary and spatial mixing]
\label{asm}
The random field $\varepsilon(s)$ is stationary and strongly mixing with coefficient $\alpha$. The strong mixing coefficient can be bounded by $\alpha(a,b)\le \alpha_1(a) g(b)$, where $\alpha_1$ is a nonincreasing function with $\lim_{a\rightarrow\infty}\alpha_1(a)=0$. We assume $E_{d\sim g_n}\alpha_1(d)^2\le \frac{M}{n^2}$ where $g_n(d)$ is distribution of the distance between two sample points $s_i$ and $s_j$ ($1\le i,j\le n$). Besides, $F_\varepsilon(x)$ (the CDF of the true non-conformity score) is assumed to be Lipschitz continuous
with constant $L_{n+1}>0$. 
\end{assumption}

The requirement assumes that the true error $\varepsilon(s)$ is spatially stationary and strongly mixing, a condition weaker than the i.i.d. assumption used in \citep{mao2024valid} for spatial conformal prediction. Under this assumption, the original random field $\{X(s), Y(s)\}$ can still exhibit complex dependencies and be highly non-stationary. Here $b$ is a constant in the definition of spatial mixing, which can be any specified positive integer. The assumption further requires that the expectation of the mixing coefficient $\alpha_1(d)$ decays at a certain rate. The decay is reasonable in that a bigger $n$ implies sampling from a larger area, thereby increasing the average distance between calibration points. The assumption is analogous to a common condition in time series analysis, which requires the sum of strong mixing coefficients to be bounded. To sum up, this assumption indicates that dependence between data points diminishes as distance increases.

\subsection{Coverage guarantee}
With the assumptions, we can show that the distance between the empirical CDF of the residuals $\hat{\varepsilon(s)}$ and the noise $\varepsilon(s)$ can actually be bounded through Lemma \ref{l1}
\begin{lemma}[Distance between the empirical CDF of $\varepsilon$ and $\hat{\varepsilon}$]
\label{l1}
Under Assumption \ref{aw} and \ref{ae},
\begin{equation}
\begin{aligned}
\sup_x |\widehat{F}_{n+1}(y)-\widetilde{F}_{n+1}(y)|\le(L_{n+1}+1) \delta_n+2 \sup _y|\widetilde{F}_{n+1}(y)-F_{\varepsilon}(y)|.    
\end{aligned}
\end{equation}
\end{lemma}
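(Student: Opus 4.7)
The plan is to sandwich the indicator $\mathbf{1}\{\hat{\varepsilon}(s_i)\le y\}$ between two indicators of the true noise $\varepsilon(s_i)$ shifted by a small amount, bound the resulting discrepancy via the weighted sum-of-squares control in Assumption~\ref{ae}, and then pivot through $F_{\varepsilon}$ to exchange arguments of $\widetilde{F}_{n+1}$ using its Lipschitz constant $L_{n+1}$. Writing $\Delta_i := \hat{\varepsilon}(s_i)-\varepsilon(s_i)$ and observing that $\hat{\varepsilon}(s_i)\le y$ is equivalent to $\varepsilon(s_i)\le y-\Delta_i$, for any threshold $\eta>0$ one has the pointwise inequalities
\[
\mathbf{1}\{\varepsilon(s_i)\le y-\eta\}-\mathbf{1}\{|\Delta_i|>\eta\} \;\le\; \mathbf{1}\{\hat{\varepsilon}(s_i)\le y\} \;\le\; \mathbf{1}\{\varepsilon(s_i)\le y+\eta\}+\mathbf{1}\{|\Delta_i|>\eta\}.
\]
Averaging against the normalized weights $\omega_n(X(s_i))$ gives the bracket
\[
\widetilde{F}_{n+1}(y-\eta)-T_\eta \;\le\; \widehat{F}_{n+1}(y) \;\le\; \widetilde{F}_{n+1}(y+\eta)+T_\eta,
\]
where $T_\eta:=\sum_{i=1}^{n}\omega_n(X(s_i))\,\mathbf{1}\{|\Delta_i|>\eta\}$.

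Next I would bound $T_\eta$ by exploiting the interplay between Assumptions~\ref{aw} and~\ref{ae}. By Cauchy--Schwarz, normalization of the weights, and $\omega_n(X(s_i))\le M_n$,
\[
\sum_i \omega_n(X(s_i))\,|\Delta_i| \;\le\; \Bigl(\sum_i \omega_n(X(s_i))\Bigr)^{1/2}\Bigl(M_n\sum_i|\Delta_i|^2\Bigr)^{1/2} \;\le\; \Bigl(M_n\cdot\tfrac{\delta_n^2}{M_n}\Bigr)^{1/2}=\delta_n,
\]
so Markov's inequality yields $T_\eta\le \delta_n/\eta$. Separately, inserting $F_{\varepsilon}$ as a pivot and using its Lipschitz continuity, one obtains $|\widetilde{F}_{n+1}(y\pm\eta)-\widetilde{F}_{n+1}(y)|\le L_{n+1}\eta+2\sup_z|\widetilde{F}_{n+1}(z)-F_{\varepsilon}(z)|$. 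Plugging both bounds into the sandwich and taking $\sup_y$ produces
\[
\sup_y\,|\widehat{F}_{n+1}(y)-\widetilde{F}_{n+1}(y)|\;\le\; L_{n+1}\eta+\frac{\delta_n}{\eta}+2\sup_z|\widetilde{F}_{n+1}(z)-F_{\varepsilon}(z)|.
\]
Setting $\eta=\delta_n$, which is the natural scale given the companion bound $|\hat{\varepsilon}(s_{n+1})-\varepsilon(s_{n+1})|\le\delta_n$ in Assumption~\ref{ae}, and regrouping the coefficients of $\delta_n$ delivers the claimed $(L_{n+1}+1)\delta_n$ prefactor.

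The main technical obstacle is the tail term $T_\eta$: Assumption~\ref{ae} controls only the aggregate $\sum_i |\Delta_i|^2$ rather than each individual $|\Delta_i|$, so one cannot rule out a handful of residual errors exceeding $\delta_n$. The delicate step is the Cauchy--Schwarz calculation above, in which the uniform cap $M_n$ on the weights cancels the factor $1/M_n$ in the squared-error bound, leaving an $O(\delta_n)$ budget for the weighted tail. Without that cancellation, the tail contribution would not shrink with $\delta_n$ and the Lipschitz/pivot argument would have nothing to balance against. I would also need to be careful to handle the two sides of the sandwich symmetrically so that the coefficient of the supremum term comes out to $2$ rather than $1$.
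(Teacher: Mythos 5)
Your overall strategy---split the calibration points into those with error above and below a threshold $\eta$, bound the weight of the bad set by a Markov-type argument, and handle the good set by pivoting $\widetilde{F}_{n+1}$ through $F_{\varepsilon}$ via the Lipschitz constant---is exactly the route the paper takes (its set $S$ is your event $\{|\Delta_i|>\eta\}$ at $\eta=\delta_n$). But there is a genuine quantitative gap at the tail term. Your Cauchy--Schwarz step gives $\sum_i \omega_n(X(s_i))|\Delta_i|\le \delta_n$, hence $T_\eta\le \delta_n/\eta$ by Markov. At your choice $\eta=\delta_n$ this evaluates to $T_{\delta_n}\le 1$, a vacuous bound, so your own displayed inequality $\sup_y|\widehat{F}_{n+1}(y)-\widetilde{F}_{n+1}(y)|\le L_{n+1}\eta+\delta_n/\eta+2\sup_z|\widetilde{F}_{n+1}(z)-F_{\varepsilon}(z)|$ becomes $L_{n+1}\delta_n+1+2\sup_z|\cdots|$, not $(L_{n+1}+1)\delta_n+2\sup_z|\cdots|$. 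Optimizing over $\eta$ instead yields $2\sqrt{L_{n+1}\delta_n}$, which is of order $\sqrt{\delta_n}$ and still weaker than the lemma. The final sentence of your argument silently replaces $\delta_n/\eta$ by $\delta_n$; that step is false.

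What the paper actually uses is the stronger first-moment bound $\sum_i \omega_n(X(s_i))\,|\hat{\varepsilon}(s_i)-\varepsilon(s_i)|\le M_n\sum_i|\hat{\varepsilon}(s_i)-\varepsilon(s_i)|\le \delta_n^2$, i.e., it reads Assumption~\ref{ae} as controlling the (weighted) sum of absolute errors by $\delta_n^2$ rather than only the sum of squares by $\delta_n^2/M_n$. With that, Markov at level $\eta=\delta_n$ gives $T_{\delta_n}\le \delta_n^2/\delta_n=\delta_n$, which is exactly the extra power of $\delta_n$ you are missing. Your Cauchy--Schwarz derivation from the second-moment form is internally correct but loses one power of $\delta_n$ relative to what the lemma requires; to close the gap you must either invoke the first-moment reading of Assumption~\ref{ae} as the paper does, or settle for a weaker conclusion of order $\sqrt{\delta_n}$. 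The remainder of your argument---the two-sided sandwich of indicators and the Lipschitz pivot contributing $L_{n+1}\delta_n+2\sup_z|\widetilde{F}_{n+1}(z)-F_{\varepsilon}(z)|$---matches the paper's proof and is fine.
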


Besides, we can also prove that the empirical CDF of the noise $\varepsilon(s)$ converges to its CDF with a high probability.
\begin{lemma}[Convergence of empirical CDF of $\varepsilon$]
Under Assumptions \ref{aw}-\ref{asm}, with probability higher than $1-(2+M+2\sqrt{M}g(b))n^{-\frac{2\gamma}{3}}(\log_2 n+2)^{\frac{4}{3}}$,
\begin{equation}
\sup_y\left|\widetilde{F}_{n+1}(y)-F_\epsilon(y)\right|\le M_n n^{\frac{1+\gamma}{2}},
\end{equation}    
where $\widetilde{F}_{n+1}(y)=\sum_{i=1}^{n}\omega_{n}(X(s_i))\mathbf{1}(\epsilon(s_i)\le y)$.
\end{lemma}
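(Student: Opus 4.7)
The plan is to bound the fluctuations of $\widetilde{F}_{n+1}(y)$ around $F_\varepsilon(y)$ first pointwise by a variance argument and then uniformly over $y$ by chaining over a dyadic grid. Define the centered indicators $\xi_i(y)=\mathbf{1}(\varepsilon(s_i)\le y)-F_\varepsilon(y)$, which are bounded by $1$ and have mean zero by stationarity (Assumption~\ref{asm}). Then
\[
\widetilde{F}_{n+1}(y)-F_\varepsilon(y)=\sum_{i=1}^{n}\omega_{n}(X(s_i))\,\xi_i(y).
\]

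For the pointwise bound, I would compute $\mathbb{E}[(\widetilde{F}_{n+1}(y)-F_\varepsilon(y))^2]$ and split it into diagonal and off-diagonal parts. The diagonal contributes at most $M_n\sum_i\omega_n(X(s_i))=M_n$ by Assumption~\ref{aw}. For the off-diagonal part, Assumption~\ref{asm} gives $|\mathrm{Cov}(\xi_i(y),\xi_j(y))|\le\alpha_1(d(s_i,s_j))\,g(b)$; applying Cauchy--Schwarz to $\mathbb{E}[\alpha_1(d)^2]\le M/n^2$ yields $\mathbb{E}[\alpha_1(d)]\le \sqrt{M}/n$, so the off-diagonal contribution in expectation is at most $\sqrt{M}\,g(b)/n$. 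Since $\sum_i\omega_n(X(s_i))=1$ forces $M_n\ge 1/n$, Chebyshev's inequality provides a pointwise tail bound of order $(1+\sqrt{M}g(b))M_n/t^2$.

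To lift this to the supremum over $y$, I would use the Lipschitz continuity of $F_\varepsilon$ (Assumption~\ref{asm}) together with monotonicity of both $\widetilde{F}_{n+1}$ and $F_\varepsilon$ to reduce $\sup_y$ to a maximum over a finite quantile grid, with controlled discretization error. Organizing this grid dyadically, at level $k$ I would place $2^k$ points at the quantiles $F_\varepsilon^{-1}(j/2^k)$. A chaining decomposition then shows that the increment at level $k$ involves indicators of intervals of probability mass $2^{-k}$, whose variance shrinks proportionally. Optimizing the per-level deviation thresholds $t_k$ subject to $\sum_k t_k\le M_n n^{(1+\gamma)/2}$ and observing that $\lceil\log_2 n\rceil$ levels suffice to reach the finest useful resolution produces, via union bound, the exponents $n^{-2\gamma/3}$ and $(\log_2 n+2)^{4/3}$ that appear in the final probability bound.

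The main obstacle is the coupled randomness of the sample locations, the weights, and the mixing structure: both $\omega_{n}(X(s_i))$ and the mixing coefficient $\alpha_1(d(s_i,s_j))$ depend on the random $s_i$, so the variance calculation is only in expectation over the design rather than almost sure. I would resolve this by first conditioning on the sample locations, which turns Assumption~\ref{aw} into an almost sure bound on $M_n$, and then taking expectation of the covariance term through the pairwise distance distribution $g_n$ from Assumption~\ref{asm}. A secondary technical point, which I expect to be the most delicate calculation, is the precise bookkeeping in the chaining step that produces the fractional exponents $2/3$ and $4/3$; these arise from balancing the level-$k$ variance decay $2^{-k}$ against the level-$k$ grid cardinality $2^k$ and then summing the per-level failure probabilities over $k=0,\ldots,\lceil\log_2 n\rceil$.
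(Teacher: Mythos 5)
Your overall strategy is the same as the paper's: center the indicators, bound variances of interval-indicator sums via a strong-mixing covariance inequality (the paper invokes Rio's Lemma~1.1 to get $\sum_j \mathrm{Var}\,Z_{n+1}(A_j)\le M_n^2\bigl(n+2\sum_{i<j}\alpha_{ij}\bigr)$ over any partition, which is exactly your diagonal-plus-off-diagonal split), chain over the dyadic quantile grid $F_\varepsilon^{-1}(j2^{-k})$, and finish with Chebyshev/Markov. One small correction on the bookkeeping you flag as delicate: the paper does not exploit any per-cell variance decay $2^{-k}$ at level $k$; it only uses that the level-$k$ cells partition $\mathbb{R}$, so each level's summed variance obeys the same bound $nM_n^2\bigl(1+2n^{-1}\sum_{i<j}\alpha_{ij}\,g(b)\bigr)$, takes $L^2$ norms, sums over $N=\log_2 n$ levels, and applies a single Markov inequality. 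The chaining therefore contributes only the $(2+\log_2 n)^2$ factor, not the fractional exponents.

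The genuine gap is in how you handle the random sum of mixing coefficients, which is where the exponents $n^{-2\gamma/3}$ and $(\log_2 n+2)^{4/3}$ actually originate. After conditioning on the locations, $\Sigma_n:=\sum_{i<j}\alpha_1(|s_i-s_j|)$ is a fixed number, and Assumption~\ref{asm} only controls its distribution; you cannot substitute $\mathbb{E}[\alpha_1(d)]\le\sqrt{M}/n$ into a conditional variance bound. The paper adds a separate second-moment step, $\mathrm{Var}(\Sigma_n)\le n^2M$, and a Markov inequality showing $n^{-1}\Sigma_n\le\sqrt{M}g(b)+n^{\gamma/3}(\log_2 n+2)^{-2/3}$ except on an event of probability $Mn^{-2\gamma/3}(\log_2 n+2)^{4/3}$; the threshold is chosen precisely to balance this failure probability against the Chebyshev bound $(2+\log_2 n)^2 n^{-\gamma}(\cdots)$ on the supremum, and that balancing — not the per-level union bound — produces the exponents $2/3$ and $4/3$ and the constant $2+M+2\sqrt{M}g(b)$. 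As written, your plan either leaves you with only an expectation bound on the conditional variance (insufficient for the stated tail bound), or, if you take the full expectation over the design before applying Chebyshev, yields a bound of order $(1+2\sqrt{M}g(b))(\log_2 n+2)^2 n^{-\gamma}$ — which does imply the lemma for $n$ large, but by a different accounting than the one you describe. Either way you must add an explicit treatment of the randomness of $\Sigma_n$ (high-probability control as in the paper, or the unconditional-expectation route) to complete the proof.
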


Our main theorem is the following: Using the previous lemma, we establish the asymptotic convergence of conditional coverage.
\begin{theorem}[Conditional coverage guarantee]
Under Assumption \ref{aw}-\ref{asm}, for any $\alpha\in(0,1)$ and sample size $T$, we have
\begin{equation}
\begin{aligned}
\label{main_thm}
&\left|\mathbb{P}\left(Y(s_{n+1}) \in \widehat{C}_{t-1}\left(X(s_{n+1})\right) \mid X(s_{n+1})\right)-(1-\alpha)\right| \\
&\le 4L_{n+1}\delta_n + 6M_n n^{\frac{1+\gamma}{2}}  + (4 + 2M + 4\sqrt{M}g(b))n^{-\frac{2\gamma}{3}}(\log_2 n + 2)^{\frac{4}{3}}.
\end{aligned}
\end{equation}
\end{theorem}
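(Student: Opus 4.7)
The plan is to reduce the conditional coverage probability to an expression involving the estimated CDF $\widehat{F}_{n+1}$ evaluated at its own quantiles, and then transfer the approximation back to the true noise CDF $F_\varepsilon$ using Lemmas~\ref{l1} and the convergence lemma, handled separately on a ``good event'' of high probability and a complementary ``bad event.''

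First, I would rewrite the event $\{Y(s_{n+1}) \in \widehat{C}_n(X(s_{n+1}))\}$ as $\{\widehat{Q}_n(\beta^*) \le \hat{\varepsilon}(s_{n+1}) \le \widehat{Q}_n(1-\alpha+\beta^*)\}$, using the form of the prediction interval. Conditioning on $X(s_{n+1})$ and on the calibration residuals (so that $\widehat{Q}_n$ becomes fixed), the coverage probability equals $F_{\hat{\varepsilon}\mid X}(\widehat{Q}_n(1-\alpha+\beta^*)) - F_{\hat{\varepsilon}\mid X}(\widehat{Q}_n(\beta^*))$, where $F_{\hat{\varepsilon}\mid X}$ is the conditional CDF of $\hat{\varepsilon}(s_{n+1})$. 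The deviation from $1-\alpha$ can then be written as the difference of two terms of the form $F_{\hat{\varepsilon}\mid X}(\widehat{Q}_n(p))-p$ evaluated at $p=\beta^*$ and $p=1-\alpha+\beta^*$, which I would bound uniformly in $p$ so that the minimizing choice of $\beta^*$ is irrelevant.

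For each such term I would apply three triangle-inequality steps: (i) replace $F_{\hat{\varepsilon}\mid X}$ by $F_\varepsilon$, which costs $L_{n+1}\delta_n$ by the Lipschitz assumption on $F_\varepsilon$ together with $|\hat{\varepsilon}(s_{n+1})-\varepsilon(s_{n+1})|\le\delta_n$ (since the conditional CDF of $\hat{\varepsilon}(s_{n+1})$ is a deterministic shift of $F_\varepsilon$ by at most $\delta_n$); (ii) replace $F_\varepsilon$ by $\widehat{F}_{n+1}$, which on the good event of the convergence lemma costs at most $(L_{n+1}+1)\delta_n + 3M_n n^{(1+\gamma)/2}$ via Lemma~\ref{l1} combined with the convergence lemma; and (iii) invoke the quantile identity $|\widehat{F}_{n+1}(\widehat{Q}_n(p))-p|\le M_n$, which follows because $\widehat{F}_{n+1}$ is a step function with jumps of size at most $M_n$. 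Summing the two endpoint contributions and absorbing lower-order terms into $6M_n n^{(1+\gamma)/2}$ produces the dominant pieces $4L_{n+1}\delta_n+6M_n n^{(1+\gamma)/2}$. On the complementary bad event of the convergence lemma, I would bound the coverage gap trivially by $1$; this contributes at most $(2+M+2\sqrt{M}g(b))n^{-2\gamma/3}(\log_2 n+2)^{4/3}$ to each endpoint, giving the stated third term after combining both endpoints.

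The main obstacle I anticipate is step (iii), namely handling the discreteness of $\widehat{F}_{n+1}$ carefully when inverting it at both $\beta^*$ and $1-\alpha+\beta^*$, while ensuring that the optimization in $\beta^*$ does not break the decoupling between $\widehat{Q}_n$ and $\hat{\varepsilon}(s_{n+1})$. A secondary technical point is justifying that (i) gives exactly the constant $L_{n+1}\delta_n$: because $\hat{f}$ is trained on a disjoint split, $f(X(s_{n+1}))-\hat{f}(X(s_{n+1}))$ is a deterministic shift conditional on $X(s_{n+1})$, so the CDF shift argument is clean, but this measurability point deserves an explicit remark. Once these are settled, the rest is a bookkeeping combination of the two lemmas.
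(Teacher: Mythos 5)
Your overall architecture (reduce the gap to two endpoint terms of the form $F(\widehat{Q}_n(p))-p$, split into a good event from the convergence lemma and a trivially bounded bad event, and use the step-function identity $|\widehat{F}_{n+1}(\widehat{Q}_n(p))-p|\le M_n$) is a legitimate alternative to the paper's route, which instead compares the indicator of $\{\beta\le\widehat{F}_{n+1}(\hat\varepsilon_{n+1})\le 1-\alpha+\beta\}$ with that of $\{\beta\le F_\varepsilon(\varepsilon_{n+1})\le 1-\alpha+\beta\}$ and exploits the uniformity of the probability integral transform $F_\varepsilon(\varepsilon_{n+1})$. However, your step (i) contains a genuine gap. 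You condition on the calibration residuals so that $\widehat{Q}_n$ is fixed, and then identify the resulting conditional coverage with $F_{\hat\varepsilon\mid X}(\widehat{Q}_n(1-\alpha+\beta^*))-F_{\hat\varepsilon\mid X}(\widehat{Q}_n(\beta^*))$, where $F_{\hat\varepsilon\mid X}$ is (a $\delta_n$-shift of) the marginal $F_\varepsilon$. That identification requires $\hat\varepsilon(s_{n+1})$ to be conditionally independent of the calibration residuals given $X(s_{n+1})$. Under Assumption \ref{asm} the noise field is only stationary and mixing, not independent: the law of $\varepsilon(s_{n+1})$ conditional on $\{\varepsilon(s_i)\}_{i\le n}$ can differ drastically from the marginal $F_\varepsilon$ (for a correlated Gaussian field the conditional variance collapses near observed points), and nothing in Assumptions \ref{aw}--\ref{asm} controls this discrepancy in sup norm --- the mixing condition only bounds pairwise dependence in an averaged sense, which is exactly the resource consumed by the convergence lemma. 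So the cost of your step (i) is not $L_{n+1}\delta_n$; it is $L_{n+1}\delta_n$ plus an uncontrolled term, and this is precisely the point where the non-exchangeable spatial setting differs from the textbook split-conformal argument you are implicitly invoking.

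The paper's proof is arranged to avoid this conditioning entirely. It never evaluates a conditional law of $\varepsilon_{n+1}$ given the calibration data; it bounds $\left|\mathbf{1}\{\beta\le\widehat{F}_{n+1}(\hat\varepsilon_{n+1})\le 1-\alpha+\beta\}-\mathbf{1}\{\beta\le F_\varepsilon(\varepsilon_{n+1})\le 1-\alpha+\beta\}\right|$ pointwise by indicators of the form $\mathbf{1}\{|F_\varepsilon(\varepsilon_{n+1})-\beta|\le |F_\varepsilon(\varepsilon_{n+1})-\widehat{F}_{n+1}(\hat\varepsilon_{n+1})|\}$, peels off the low-probability event of the convergence lemma additively, and on the complementary event replaces $|F_\varepsilon(\varepsilon_{n+1})-\widehat{F}_{n+1}(\hat\varepsilon_{n+1})|$ by the deterministic quantity $(2L_{n+1}+1)\delta_n+3M_n n^{\frac{1+\gamma}{2}}$ via Lemma \ref{l1} and Assumption \ref{ae}; the remaining probability involves only the marginal (uniform) law of $F_\varepsilon(\varepsilon_{n+1})$, which is available by stationarity. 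If you wish to keep your quantile-inversion formulation, you must run it unconditionally in the same spirit --- insert the sup-norm good event before ever fixing $\widehat{Q}_n$ --- at which point the two arguments essentially coincide. Your secondary concerns (discreteness of $\widehat{F}_{n+1}$ and the data-dependence of $\beta^*$) are benign and are handled in the paper exactly as you suggest, by proving a bound uniform in $\beta$.
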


We can establish the same result for marginal coverage with the tower law property.
\begin{corollary}[Marginal Coverage]
Under Assumption \ref{aw}-\ref{asm}, for any $\alpha\in(0,1)$ and sample size $T$, we have
\begin{equation}
\begin{aligned}
&\left|\mathbb{P}\left(Y(s_{n+1}) \in \widehat{C}_{t-1}\left(X(s_{n+1})\right)\right)-(1-\alpha)\right| \\
&\le 4L_{n+1}\delta_n+ 6M_n n^{\frac{1+\gamma}{2}}+ (4+2M+4\sqrt{M}g(b))n^{-\frac{2\gamma}{3}}(\log_2 n+2)^{\frac{4}{3}}. 
\end{aligned}
\end{equation}
\end{corollary}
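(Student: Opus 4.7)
The plan is to deduce the marginal coverage bound directly from the conditional coverage theorem by integrating out $X(s_{n+1})$. The key observation is that the right-hand side of the conditional bound, namely $4L_{n+1}\delta_n + 6M_n n^{(1+\gamma)/2} + (4+2M+4\sqrt{M}g(b))n^{-2\gamma/3}(\log_2 n + 2)^{4/3}$, is a deterministic quantity that does not depend on the value of $X(s_{n+1})$; all its constituents ($L_{n+1}$, $\delta_n$, $M_n$, $\gamma$, $M$, $g(b)$, $n$) are data- and assumption-level constants. So once conditional coverage is established, averaging over $X(s_{n+1})$ preserves the bound.

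First I would write the marginal coverage probability using the tower property of conditional expectation:
\begin{equation*}
\mathbb{P}\!\left(Y(s_{n+1}) \in \widehat{C}_{n}(X(s_{n+1}))\right) = \mathbb{E}\!\left[\mathbb{P}\!\left(Y(s_{n+1}) \in \widehat{C}_{n}(X(s_{n+1})) \mid X(s_{n+1})\right)\right].
\end{equation*}
Subtracting $1-\alpha$, pulling it inside the expectation, and applying Jensen's inequality (or simply the triangle inequality for expectations) would then give
\begin{equation*}
\left|\mathbb{P}\!\left(Y(s_{n+1}) \in \widehat{C}_{n}(X(s_{n+1}))\right) - (1-\alpha)\right| \le \mathbb{E}\!\left[\left|\mathbb{P}\!\left(Y(s_{n+1}) \in \widehat{C}_{n}(X(s_{n+1})) \mid X(s_{n+1})\right) - (1-\alpha)\right|\right].
\end{equation*}

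Next I would apply the theorem's conditional coverage bound pointwise in $X(s_{n+1})$ inside the expectation. Since the resulting upper bound is independent of $X(s_{n+1})$, the expectation is trivial and the bound passes through unchanged, yielding exactly the stated inequality. There is no genuine obstacle here, as the corollary is a routine application of the tower property; the only minor care point is verifying that each term in the conditional bound is in fact a constant with respect to $X(s_{n+1})$, which follows directly from Assumptions \ref{aw}–\ref{asm} (the Lipschitz constant $L_{n+1}$, the estimation error $\delta_n$, the weight maximum $M_n$, and the mixing quantities $M$ and $g(b)$ are all defined unconditionally on the test feature). Thus the proof reduces to one line invoking the theorem and one line applying the tower property.
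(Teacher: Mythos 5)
Your proposal is correct and matches the paper's own (one-line) justification, which simply invokes the tower law: conditioning, applying the conditional coverage theorem pointwise, and noting that the bound is uniform in $X(s_{n+1})$ is exactly the intended argument. The only nuance worth noting is that the weights $\omega_n(X(s_i))$ (and hence $M_n$) formally depend on the test feature through the similarity measure, but Assumption \ref{aw} imposes the decay uniformly, so the bound still passes through the expectation unchanged.
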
 

From Inequality \ref{main_thm}, we can see that the order of the coverage bound is controlled by $M_n n^{\frac{1+\gamma}{2}}$ and $n^{-\frac{2\gamma}{3}}(\log_2 n+2)^{\frac{4}{3}}$. The first term is equal to $n^{\frac{\gamma-1}{2}}$ in the special case of $M_n = \frac{1}{n}$ and the second term diminishes when $n$ is large enough because $n$ is of higher order than $\log_2 n$. As long as the estimation gap $\delta_n$ goes to zero when $n$ gets larger, the asymptotic conditional coverage can be inferred from the main theorem.

\begin{figure}[!t]
    \centering

    \begin{minipage}[b]{\linewidth}
        \centering
        \textbf{Scenario 1} 
    \end{minipage}
    \begin{minipage}[b]{0.35\linewidth}
        \centering
        \includegraphics[width=0.8\linewidth]{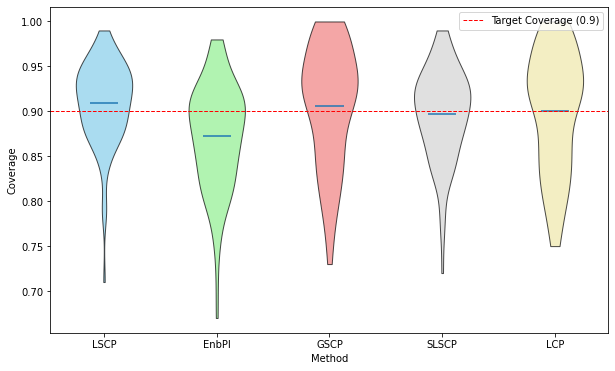}
        \subcaption{Coverage} 
        \label{fig:vc1}
    \end{minipage}
    \hspace{0.1\linewidth}
    \begin{minipage}[b]{0.35\linewidth}
        \centering
        \includegraphics[width=0.8\linewidth]{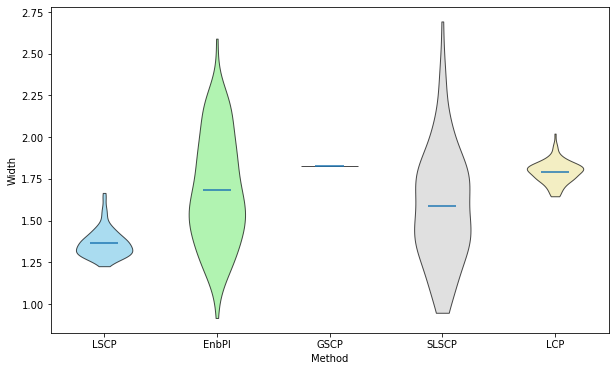}
        \subcaption{Width}
        \label{fig:vw1}
    \end{minipage}

    \begin{minipage}[b]{\linewidth}
        \centering
        \textbf{Scenario 2}
    \end{minipage}
    \begin{minipage}[b]{0.35\linewidth}
        \centering
        \includegraphics[width=0.8\linewidth]{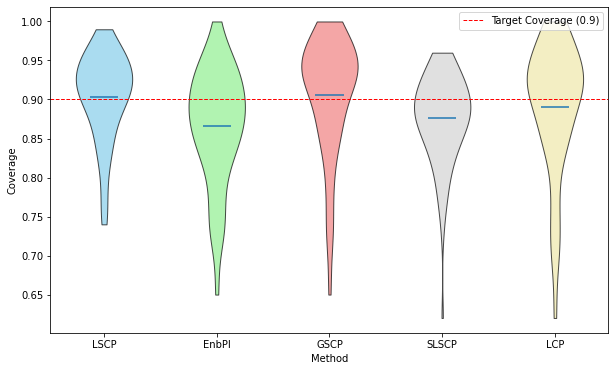}
        \subcaption{Coverage}
        \label{fig:vc2}
    \end{minipage}
    \hspace{0.1\linewidth}
    \begin{minipage}[b]{0.35\linewidth}
        \centering
        \includegraphics[width=0.8\linewidth]{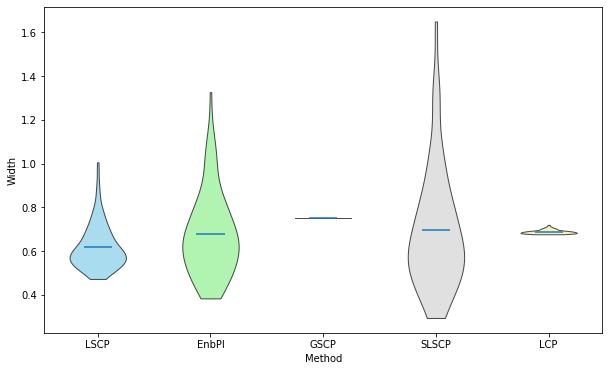}
        \subcaption{Width}
        \label{fig:vw2}
    \end{minipage}

    \begin{minipage}[b]{\linewidth}
        \centering
        \textbf{Scenario 3}
    \end{minipage}
    \begin{minipage}[b]{0.35\linewidth}
        \centering
        \includegraphics[width=0.8\linewidth]{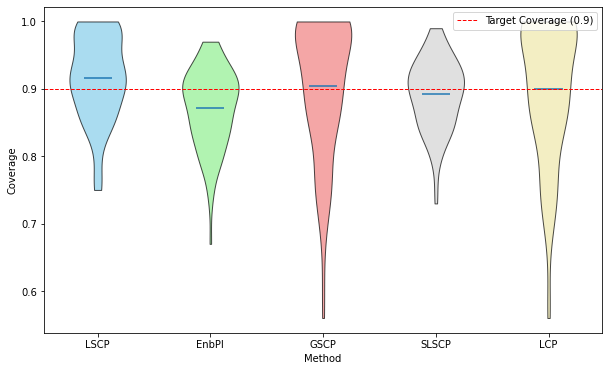}
        \subcaption{Coverage}
        \label{fig:vc3}
    \end{minipage}
    \hspace{0.1\linewidth}
    \begin{minipage}[b]{0.35\linewidth}
        \centering
        \includegraphics[width=0.8\linewidth]{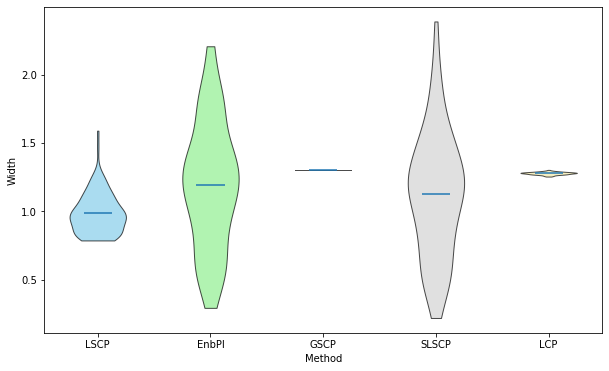}
        \subcaption{Width}
        \label{fig:vw3}
    \end{minipage}

    \begin{minipage}[b]{\linewidth}
        \centering
        \textbf{New Mexico}
    \end{minipage}
    \begin{minipage}[b]{0.35\linewidth}
        \centering
        \includegraphics[width=0.8\linewidth]{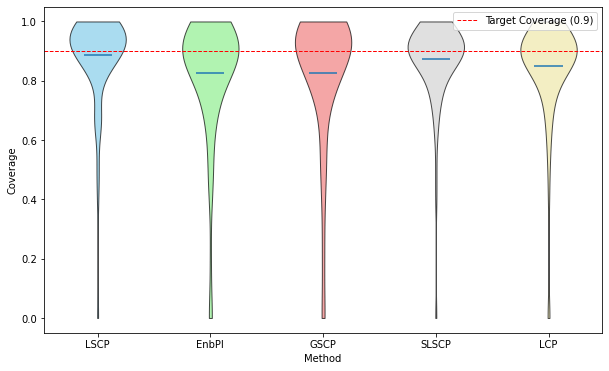}
        \subcaption{Coverage}
        \label{fig:vcNM}
    \end{minipage}
    \hspace{0.1\linewidth}
    \begin{minipage}[b]{0.35\linewidth}
        \centering
        \includegraphics[width=0.8\linewidth]{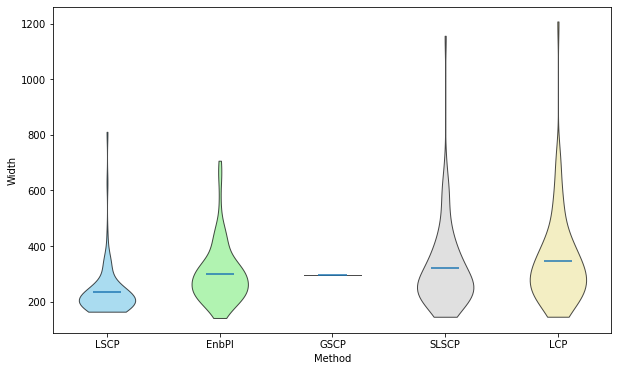}
        \subcaption{Width}
        \label{fig:vwNM}
    \end{minipage}

    \begin{minipage}[b]{\linewidth}
        \centering
        \textbf{Georgia}
    \end{minipage}
    \begin{minipage}[b]{0.35\linewidth}
        \centering
        \includegraphics[width=0.8\linewidth]{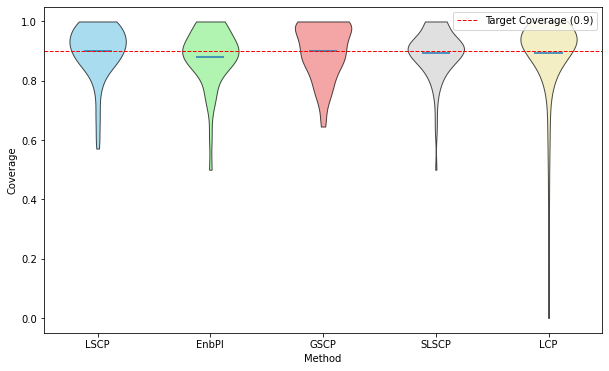}
        \subcaption{Coverage}
        \label{fig:vcGA}
    \end{minipage}
    \hspace{0.1\linewidth}
    \begin{minipage}[b]{0.35\linewidth}
        \centering
        \includegraphics[width=0.8\linewidth]{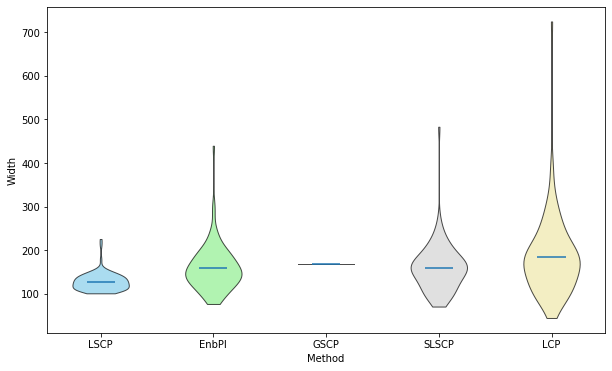}
        \subcaption{Width}
        \label{fig:vwGA}
    \end{minipage}

    \caption{The violin plots on the left show the distribution of coverage across different areas, while the plots on the right show the distribution of width. Each row represents a different scenario or location.}
    \label{fig:violin}
    \vspace{-.1in}
\end{figure}

\begin{figure}[t]
    \centering
    \begin{minipage}[b]{\linewidth}
        \centering
        \title{Scenario 1}
    \end{minipage}
    \begin{minipage}[b]{0.18\linewidth}
        \centering
        \includegraphics[width=\linewidth]{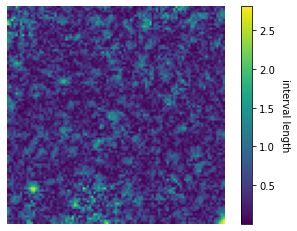}
        \label{fig:res1}
    \end{minipage}
    \begin{minipage}[b]{0.18\linewidth}
        \centering
        \includegraphics[width=\linewidth]{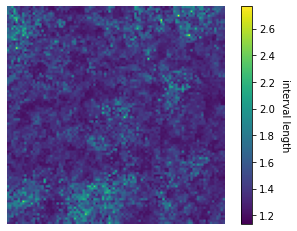}
        \label{fig:LSCP1}
    \end{minipage}
    \begin{minipage}[b]{0.18\linewidth}
        \centering
        \includegraphics[width=\linewidth]{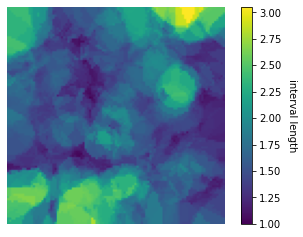}
        \label{fig:Enbpi1}
    \end{minipage}
    \begin{minipage}[b]{0.18\linewidth}
        \centering
        \includegraphics[width=\linewidth]{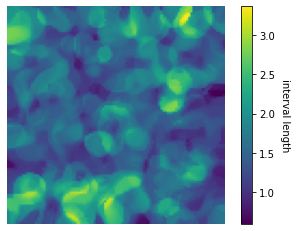}
        \label{fig:SLSCP1}
    \end{minipage}
    \begin{minipage}[b]{0.18\linewidth}
        \centering
        \includegraphics[width=\linewidth]{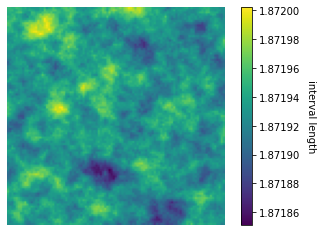}
        \label{fig:LCP1}
    \end{minipage}

    \begin{minipage}[b]{\linewidth}
        \centering
        \title{Scenario 2}
    \end{minipage}

    \begin{minipage}[b]{0.18\linewidth}
        \centering
        \includegraphics[width=\linewidth]{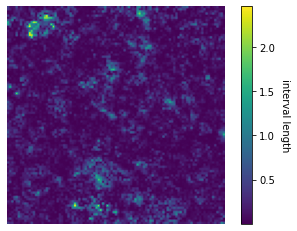}
        \label{fig:res2}
    \end{minipage}
    \begin{minipage}[b]{0.18\linewidth}
        \centering
        \includegraphics[width=\linewidth]{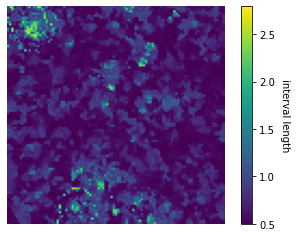}
        \label{fig:LSCP2}
    \end{minipage}
    \begin{minipage}[b]{0.18\linewidth}
        \centering
        \includegraphics[width=\linewidth]{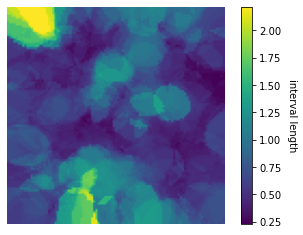}
        \label{fig:Enbpi2}
    \end{minipage}
    \begin{minipage}[b]{0.18\linewidth}
        \centering
        \includegraphics[width=\linewidth]{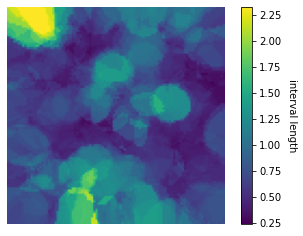}
        \label{fig:SLSCP2}
    \end{minipage}
    \begin{minipage}[b]{0.18\linewidth}
        \centering
        \includegraphics[width=\linewidth]{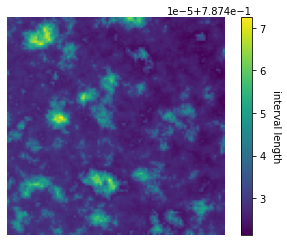}
        \label{fig:LCP2}
    \end{minipage}

    \begin{minipage}[b]{\linewidth}
        \centering
        \title{Scenario 3}
    \end{minipage}

    \begin{minipage}[b]{0.18\linewidth}
        \centering
        \includegraphics[width=\linewidth]{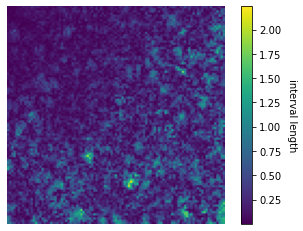}
        \subcaption{Residual}
        \label{fig:res3}
    \end{minipage}
    \begin{minipage}[b]{0.18\linewidth}
        \centering
        \includegraphics[width=\linewidth]{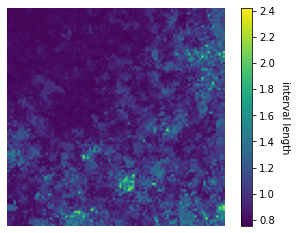}
        \subcaption{LSCP}
        \label{fig:LSCP3}
    \end{minipage}
    \begin{minipage}[b]{0.18\linewidth}
        \centering
        \includegraphics[width=\linewidth]{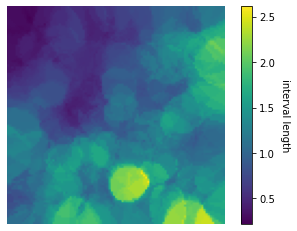}
        \subcaption{EnbPI}
        \label{fig:Enbpi3}
    \end{minipage}
    \begin{minipage}[b]{0.18\linewidth}
        \centering
        \includegraphics[width=\linewidth]{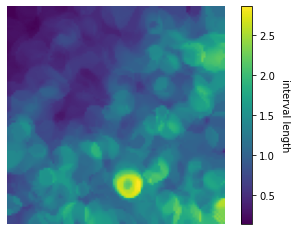}
        \subcaption{SLSCP}
        \label{fig:SLSCP3}
    \end{minipage}
    \begin{minipage}[b]{0.18\linewidth}
        \centering
        \includegraphics[width=\linewidth]{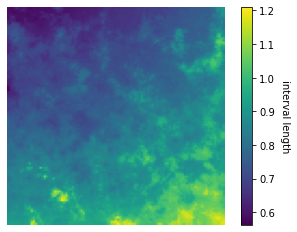}
        \subcaption{LCP}
        \label{fig:LCP3}
    \end{minipage}
    \caption{The heatmaps illustrate the width of the prediction intervals for each method across the three scenarios. The width heatmap of LSCP closely matches the true residual heatmap, demonstrating its ability to capture fine details accurately.}
    \label{fig:combined_scenarios}
    \vspace{-0.2in}
\end{figure}

\section{Experiments}
\label{num_result}

In this section, we compare our proposed LSCP method against four baseline approaches. The first method, EnbPI \citep{xu2021conformal}, is designed for time series data and applies equal weight to the most recent observations. The second method, GSCP \citep{mao2024valid}, uses the entire dataset for prediction, applying equal weights to all data points. The third method, SLSCP \citep{mao2024valid}, leverages k-nearest neighbors for uncertainty quantification, assigning different weights to each point based on the spatial distance between the test data and the calibration data. The fourth method, LCP \citep{guan2023localized}, utilizes all data points for prediction, weighting them according to the similarity between the features of test data and calibration data. Both SLSCP and LCP use the Gaussian kernel as a similarity measure.

We randomly split the dataset into three subsets: $40\%$ for training, $40\%$ for calibration, and $20\%$ for testing. For each method and dataset, the number of neighbors and the Gaussian kernel bandwidth are selected through $5$-fold cross-validation.



\subsection{Synthetic data experiments}

We begin by comparing LSCP with baseline methods across several simulated scenarios. For simplicity, we assume that the locations \( s \) are uniformly sampled from the unit grid \( [0,1] \times [0,1] \). The mean-zero stationary Gaussian process \( X(s) \) is defined using a Matérn covariance function with variance \( \sigma^2 = 1 \), range \( \phi = 0.1 \), and smoothness \( \kappa = 0.7 \). The scenarios are as follows: \begin{enumerate}
    \item \( Y(s) = X(s) + \epsilon(s) \).
    \item \( Y(s) = X(s) | \epsilon(s) | \).
    \item \( Y(s) = X(s) + \sin(\| s \|_2) \epsilon(s) \).
\end{enumerate}

These scenarios incorporate nonlinear and non-stationary settings that extend beyond the assumptions of our theoretical framework. Nevertheless, the empirical results demonstrate that LSCP consistently outperforms other methods across all scenarios.  

In Table \ref{sc123}, LSCP achieves the target coverage of \(90\%\) across all cases while maintaining significantly narrower prediction intervals. Figure \ref{fig:combined_scenarios} visualizes the interval width over the spatial domain for each scenario and the true residuals. GSCP is omitted from these plots as it produces uniform interval widths across space. A larger interval width indicates higher model uncertainty. The results show that LSCP adapts more locally, whereas the other methods exhibit smoother, more uniform patterns. Both EnbPI and SLSCP rely on \( k \)-nearest neighbors for constructing prediction intervals, differing primarily in their weighting schemes.  

To evaluate the methods, we divide the grid into \(10 \times 10\) areas and compute coverage and average interval width for each, as shown in Figure \ref{fig:violin}. Both LSCP and GSCP achieve average coverage above the target \(90\%\), with LSCP exhibiting the most stable and consistent distribution. LSCP also attains the smallest interval width and the highest consistency. LCP performs similarly to GSCP, as both utilize the entire calibration dataset, and as kernel bandwidth increases, LCP weights converge to GSCP’s uniform weights. Overall, LSCP produces tighter valid regions with superior consistency.

\begin{table*}[!t]
    \centering
    \caption{Simulation: The table presents a comparison of the coverage and prediction interval width for five methods across three scenarios. Target coverage is $90\%$.}
    \begin{tabular}{lcccccc}
    \toprule
    Method & S1 Coverage & S1 Width & S2 Coverage & S2 Width & S3 Coverage & S3 Width \\
    \midrule
    LSCP & 92\% & \textbf{1.44} & 90.4\% & \textbf{0.64} & 91.3\% & \textbf{0.97} \\
    EnbPI & 88.5\% & 1.83 & 89.2\% & 0.75 & 86.2\% & 1.12\\
    GSCP & 89.4\% & 1.92 & 89.7\% & 0.77 & 89.6\% & 1.30\\
    SLSCP  & 90.6\%  & 1.64 & 88.2\% & 0.73 & 89.1\% & 1.07 \\
    LCP  & 89.6\% & 1.92 & 94.1\%  & 0.78 & 91.8\%  & 1.43 \\
    \bottomrule
    \end{tabular}
    \label{sc123}
\end{table*}

\begin{table*}[!t]
    \centering
    \caption{Real data: The table presents a comparison of the coverage and prediction interval width for five methods on mobile signal data. The target coverage is $90\%$.}
    \begin{tabular}{lcccc}
    \toprule
    Method & NM Coverage & NM Width & GA coverage & GA width\\
    \midrule
    LSCP & 92.6\% & \textbf{211.3} & 90.6\% & \textbf{130.8}\\
    EnbPI & 88.4\% & 272.2 & 88.6\% & 166.3\\
    GSCP & 90.1\% & 295.9 & 89.2\% & 167.3\\
    SLSCP & 89.8\% & 276.8 & 89.6\% & 167.8\\
    LCP  & 89.7\% & 266.4 & 89.3\% & 166.7\\
    \bottomrule
    \end{tabular}
    \label{table:real_data}
    \vspace{-.1in}
\end{table*}

\subsection{Real data experiments}

In the real-data experiments, we utilize mobile network measurement data from the Ookla public dataset, which records user-reported statewide mobile internet performance. Our analysis focuses on datasets from New Mexico and Georgia. The data is unevenly distributed across urban and suburban areas. We employ kernel regression as the prediction method \(\hat{f}\) and compare the performance of conformal methods. The New Mexico dataset contains 24,983 observations, while the Georgia dataset includes 28,587 data points.  

As shown in Table \ref{table:real_data}, our proposed LSCP method outperforms competing methods by achieving significantly narrower prediction intervals while maintaining better coverage. Similar to the synthetic experiments, we divide each state into \(10 \times 10\) grids and compute coverage and interval width for the test data within each grid. The violin plots in Figure \ref{fig:violin} illustrate the grid-level distribution of these metrics, highlighting spatial performance. Unlike the table, which reports averages over all test points, the plots emphasize spatial consistency, showing higher coverage in urban areas due to denser data.  

The results demonstrate that LSCP consistently achieves high coverage with narrow, uniform intervals, outperforming alternative methods. The violin plots confirm LSCP's stability and uniformity across the spatial domain, underscoring its robustness to non-uniform data distributions. These findings align with our synthetic study, further validating LSCP's effectiveness in diverse and complex settings.




\section{Conclusion}

This paper introduces the Localized Spatial Conformal Prediction (LSCP) method, which addresses key limitations in spatial and spatio-temporal prediction for non-exchangeable data. Traditional methods, such as Kriging, often depend on strong parametric assumptions that may not hold in complex real-world scenarios, including heterogeneous or large-scale spatial datasets. While machine learning techniques offer flexibility and predictive power, they typically lack robust uncertainty quantification, which is critical for applications requiring reliable decision-making.

Our results show that LSCP significantly outperforms existing methods, including GSCP, SLSCP, and EnbPI, by achieving more accurate coverage and narrower prediction intervals. This advantage is particularly pronounced in scenarios with non-stationary and non-Gaussian data, where LSCP’s flexibility enables it to effectively handle such complexities. Moreover, LSCP’s theoretical framework supports extension to spatio-temporal settings where exchangeability is unlikely to hold.

In synthetic experiments, LSCP consistently meets the target coverage with narrower intervals, even in cases beyond its assumptions, demonstrating robustness across diverse scenarios. Real-world experiments further validate LSCP’s utility, showing that it generates detailed uncertainty maps with tighter intervals than baseline methods. These features underscore LSCP’s potential as a reliable and scalable tool for spatial uncertainty quantification, offering both theoretical guarantees and practical performance benefits.

\bibliography{reference}
\bibliographystyle{icml2025}

\newpage
\appendix
\onecolumn
\setcounter{table}{0}
\setcounter{figure}{0}
\setcounter{equation}{0}

\section{Proofs}
In our spatial conformal prediction method, the weight function is defined as $\omega_{n}$. The weighted empirical distribution for the true noise $\varepsilon$ is
\begin{equation}
\tilde{F}_{n+1}(y)=\sum_{i=1}^{n}\omega_{n}(X(s_i))\mathbf{1}(\varepsilon(s_i)\le y).
\end{equation}
Here $\omega_{n}$ is the normalized weight. Besides, we also define the weighted empirical distribution for the residual $\hat{\varepsilon}$ as 
\begin{equation}
\widehat{F}_{n+1}(y)=\sum_{i=1}^{n}\omega_{n}(X(s_i))\mathbf{1}(\hat{\varepsilon}(s_i)\le y).
\end{equation}

We assume an additive true model which is commonly used in literature like \citep{xu2021conformal}:
\begin{equation}
Y(s)=f(X(s))+\varepsilon(s).
\end{equation}
Considering that the residual is $\hat{\varepsilon}(s)=Y(s)-\hat{f}(X(s))$, it follows
\begin{equation}
\varepsilon(s)=\hat{f}(X(s))-f(X(s)),
\end{equation}
which is the prediction error.

The following Lemma bounds the distance between the weighted empirical distribution for the residual and true error.

\begin{lemma}
Under Assumption \ref{ae} and \ref{aw},
\begin{equation}
\sup_x |\widehat{F}_{n+1}(y)-\widetilde{F}_{n+1}(y)|\le(L_{n+1}+1) \delta_n+2 \sup _y|\widetilde{F}_{n+1}(y)-F_{\varepsilon}(y)|.
\end{equation}
\end{lemma}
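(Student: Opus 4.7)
The plan is to couple the two weighted CDFs through the pointwise prediction errors $\Delta_i := \hat{\varepsilon}(s_i)-\varepsilon(s_i)$. Fix $y\in\mathbb{R}$. Since $\mathbf{1}(\hat{\varepsilon}(s_i)\le y)$ and $\mathbf{1}(\varepsilon(s_i)\le y)$ can differ only when $\varepsilon(s_i)$ lies between $y$ and $y-\Delta_i$, the basic estimate is
\[
\bigl|\widehat{F}_{n+1}(y)-\widetilde{F}_{n+1}(y)\bigr|
\le \sum_{i=1}^{n}\omega_{n}(X(s_i))\,\mathbf{1}\bigl(|\varepsilon(s_i)-y|\le |\Delta_i|\bigr),
\]
which reduces the task to controlling a weighted empirical count in a small, data-dependent window around $y$.

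I would then split the indices at level $\delta_n$ into an ``accurate'' set $A=\{i:|\Delta_i|\le \delta_n\}$ and its complement $B$. On $A$ the window sits inside $[y-\delta_n,y+\delta_n]$, so the contribution is at most $\widetilde{F}_{n+1}(y+\delta_n)-\widetilde{F}_{n+1}((y-\delta_n)-)$. Adding and subtracting $F_\varepsilon$ at both endpoints and invoking the Lipschitz property of $F_\varepsilon$ from Assumption \ref{asm} bounds this piece by $2L_{n+1}\delta_n + 2\sup_y|\widetilde{F}_{n+1}(y)-F_\varepsilon(y)|$, which already produces the two summands appearing on the right-hand side of the lemma. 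For $B$, I would combine the $L^2$ estimation bound $\sum_i|\Delta_i|^2\le \delta_n^2/M_n$ from Assumption \ref{ae} with the maximal-weight bound $\omega_n(X(s_i))\le M_n$ from Assumption \ref{aw} via a Markov-type inequality,
\[
\sum_{i\in B}\omega_n(X(s_i))\le \frac{1}{\delta_n^2}\sum_{i=1}^n\omega_n(X(s_i))\,|\Delta_i|^2\le \frac{M_n}{\delta_n^2}\sum_{i=1}^{n}|\Delta_i|^2.
\]

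The main obstacle lies exactly here: this crude bound delivers only a constant of order $1$, whereas the statement requires the $B$-contribution to be of order $\delta_n$ so that it can be absorbed into the $(L_{n+1}+1)\delta_n$ term. Closing that gap is the delicate part of the proof: it calls for a careful balancing of the splitting threshold against the Lipschitz piece, and likely an additional use of the pointwise bound $\|\hat{\varepsilon}(s_{n+1})-\varepsilon(s_{n+1})\|\le\delta_n$ at the test point together with the Cauchy--Schwarz estimate $\sum_i\omega_n(X(s_i))|\Delta_i|\le \delta_n$ that Assumptions \ref{aw} and \ref{ae} jointly imply. Once the $B$-part is shown to be of order $\delta_n$, summing it with the $A$-part and taking the supremum over $y$ (both bounds are already uniform in $y$) yields the stated inequality $(L_{n+1}+1)\delta_n + 2\sup_y|\widetilde{F}_{n+1}(y)-F_\varepsilon(y)|$.
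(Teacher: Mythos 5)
Your skeleton is exactly the paper's: the window-indicator estimate $|\mathbf{1}(\hat\varepsilon(s_i)\le y)-\mathbf{1}(\varepsilon(s_i)\le y)|\le\mathbf{1}(|\varepsilon(s_i)-y|\le|\Delta_i|)$, the split of the indices at threshold $\delta_n$ into an accurate set and an inaccurate set, and the treatment of the accurate part by a $\pm\delta_n$ window, the Lipschitz continuity of $F_\varepsilon$, and the uniform deviation $\sup_y|\widetilde F_{n+1}(y)-F_\varepsilon(y)|$. The one step you leave open --- showing the inaccurate set $B$ carries total weight $O(\delta_n)$ rather than $O(1)$ --- is a genuine gap, and the fixes you sketch do not close it. The paper's resolution is to control the \emph{weighted first moment} of the errors: it writes
\[
\sum_{i=1}^n\omega_n(X(s_i))\,|\hat\varepsilon(s_i)-\varepsilon(s_i)|\;\le\; M_n\sum_{i=1}^n|\hat\varepsilon(s_i)-\varepsilon(s_i)|\;\le\;\delta_n^2,
\]
and then applies Markov's inequality at level $\delta_n$: since $|\Delta_i|\ge\delta_n$ on $B$, one gets $\delta_n\sum_{i\in B}\omega_n(X(s_i))\le\delta_n^2$, i.e.\ $\sum_{i\in B}\omega_n(X(s_i))\le\delta_n$, which is exactly the order needed to absorb the $B$-contribution into the $(L_{n+1}+1)\delta_n$ term.

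Be aware, though, that this step reads Assumption \ref{ae} as a bound on $\sum_i|\hat\varepsilon(s_i)-\varepsilon(s_i)|$ by $\delta_n^2/M_n$, whereas the assumption as stated bounds the sum of \emph{squares}. Under the literal second-moment reading, your computation is correct: Markov at level $\delta_n$ yields only $\sum_{i\in B}\omega_n\le\frac{M_n}{\delta_n^2}\sum_i|\Delta_i|^2\le 1$, the Cauchy--Schwarz route you propose gives $\sum_i\omega_n|\Delta_i|\le(M_n)^{1/2}(\delta_n^2/M_n)^{1/2}=\delta_n$, which is one power of $\delta_n$ short of what Markov needs, and even optimizing the splitting threshold $t$ against the Lipschitz term (minimizing $\delta_n^2/t^2+2L_{n+1}t$ over $t$) only yields a bound of order $\delta_n^{2/3}$, not $\delta_n$. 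So your instinct that the $B$-part is ``the delicate part'' is right; the stated rate is recovered only under the first-moment version of the assumption that the paper's proof actually uses, and as written your argument does not establish the lemma.
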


\begin{proof}
Using Assumption \ref{ae}, we have that
\begin{equation}
\begin{aligned}
\sum_{i=1}^{n}\omega_{n}(X(s_i))|\varepsilon(s_i) - \hat{\varepsilon}(s_i)|\le M_n\sum_{i=1}^{n}|\varepsilon(s_i) - \hat{\varepsilon}(s_i)|\le \delta_n^2.
\end{aligned}
\end{equation}

Let $S=\{t:|\varepsilon(s_i) - \hat{\varepsilon}(s_i)|\ge \delta_n\}$. Then
\begin{equation}
\delta_n\sum_{i\in S}\omega_{n}(X(s_i))\le\sum_{i=1}^{n}\omega_{n}(X(s_i))|\varepsilon(s_i) - \hat{\varepsilon}(s_i)|\le \delta_n^{2}.
\end{equation}
So $\sum_{i\in S}\omega_{n}(X(s_i))\le  \delta_n$. Then
\begin{align*}
 |\widehat{F}_{n+1}(y)-\widetilde{F}_{n+1}(y)| &\le  \sum_{i=1}^n\omega_{n}(X(s_i))|\mathbf{1}\{\hat{\varepsilon}(s_i) \leq y\}-\mathbf{1}\{\varepsilon(s_i) \leq y\}| \\
&\le \sum_{i=1}^n\omega_{n}(X(s_i))+\sum_{i \notin S}\omega_{n}(X(s_i))|\mathbf{1}\{\hat{\varepsilon}(s_i) \leq y\}-\mathbf{1}\{\varepsilon(s_i) \leq y\}| \\
&\stackrel{(i)}{\leq} \sum_{i=1}^n\omega_{n}(X(s_i))+\sum_{i \notin S} \omega_{n}(X(s_i))\mathbf{1}\{|\varepsilon(s_i)-y| \leq \delta_n\} \\
&\le \sum_{i=1}^n\omega_{n}(X(s_i))+\sum_{i=1}^n \omega_{n}(X(s_i))\mathbf{1}\{|\varepsilon(s_i)-y| \leq \delta_n\} \\
&\le \delta_n+\mathbb{P}(|\varepsilon(s_{n+1})-y| \leq \delta_n) + \\
& \quad \sup_y\left| \sum_{i=1}^n \omega_{n}(X(s_i))\mathbf{1}\{|\varepsilon(s_i)-y| \leq \delta_n\}-\mathbb{P}(|\varepsilon(s_{n+1})-y| \leq \delta_n)\right| \\
&= \delta_n+[F_{\varepsilon}(y+\delta_n)-F_{\varepsilon}(y-\delta_n)]+\sup _y \Big\vert [\widetilde{F}_{n+1}(y+\delta_n)-\widetilde{F}_{n+1}(y-\delta_n)]\\
& \quad - [F_{\varepsilon}(y+\delta_n)-F_{\varepsilon}(y-\delta_n)] \Big\vert \\
&\stackrel{(ii)}{\leq} (L_{n+1}+1) \delta_n+2 \sup _y|\widetilde{F}_{n+1}(y)-F_{\varepsilon}(y)|,
\end{align*}
where $(i)$ is because $|\mathbf{1}\{a \leq y\}-\mathbf{1}\{b \leq y\}|\le \mathbf{1}\{|b-y|\le|a-b|\}$ for $a, b \in \R$ and $(ii)$ is because the Lipschitz continuity of $F_{\varepsilon}(y)$.
\end{proof}

\begin{lemma}
\label{t1}
Under assumption \ref{aw}-\ref{asm}, with probability higher than $1-Mn^{-\frac{2\gamma}{3}(\log_2 n+2)^{\frac{4}{3}}}$,
\begin{equation}
E\left(\sup_{y}|\tilde{F}_{n+1}(y)-F_{\varepsilon}(y)|^2 \right) \le  \frac{(2 +\log_2 n)^2}{n}(1+2\sqrt{M}g(b)+ n^{\frac{\gamma}{3}}(\log_2 n+2)^{-\frac{2}{3}}),
\end{equation}
where $\alpha = $ .
\end{lemma}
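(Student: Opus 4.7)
The plan is to bound the uniform-in-$y$ deviation of the weighted empirical process in three stages: a pointwise second-moment estimate (over the noise, conditional on the locations) that uses the strong mixing assumption, a discretization/maximal-inequality step to lift the pointwise bound to $\sup_y$, and a concluding Markov step that turns the remaining location-dependent quantity into a high-probability statement.

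First I would recenter the process. Since $\sum_i \omega_n(X(s_i)) = 1$ and $\mathbb{E}[\mathbf{1}(\varepsilon(s_i) \le y)] = F_\varepsilon(y)$ by the stationarity in Assumption~\ref{asm}, we have $\widetilde{F}_{n+1}(y) - F_\varepsilon(y) = \sum_i \omega_n(X(s_i)) Z_i(y)$ with $Z_i(y) := \mathbf{1}(\varepsilon(s_i) \le y) - F_\varepsilon(y)$ centered and bounded in $[-1,1]$. Conditioning on $\{s_i\}$, the second moment splits into a diagonal and an off-diagonal contribution:
\begin{equation*}
  \mathbb{E}\!\left[ |\widetilde{F}_{n+1}(y) - F_\varepsilon(y)|^2 \mid \{s_i\} \right]
  \le \tfrac{1}{4}\sum_i \omega_n(X(s_i))^2 + \sum_{i \ne j} \omega_n(X(s_i))\omega_n(X(s_j))\, |\mathrm{Cov}(Z_i(y), Z_j(y))|.
\end{equation*}
For indicator functionals of a strongly mixing field, the standard covariance inequality gives $|\mathrm{Cov}(Z_i(y), Z_j(y))| \le 4 \alpha(\|s_i - s_j\|, b) \le 4 \alpha_1(\|s_i - s_j\|)\, g(b)$ via the product decomposition in Assumption~\ref{asm}, while the diagonal is at most $M_n/4$.

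Second, I would pass from pointwise control to uniform control over $y$. Because $y \mapsto \widetilde{F}_{n+1}(y)$ is a step function with at most $n$ jumps and $F_\varepsilon$ is $L_{n+1}$-Lipschitz, the supremum over $y \in \mathbb{R}$ is controlled by a maximum over a deterministic grid of size polynomial in $n$, plus a Lipschitz interpolation error of order $1/n$. A dyadic chaining or maximal-inequality argument at the $\log_2 n$ dyadic scales then converts the pointwise second moment into a uniform one, with the loss absorbed into the $(\log_2 n + 2)^2$ factor appearing in the claim.

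Third, the conditional bound still contains the random quantity $\sum_{i \ne j} \omega_n(X(s_i))\omega_n(X(s_j))\,\alpha_1(\|s_i - s_j\|)$. By Cauchy--Schwarz together with $\mathbb{E}_{d \sim g_n}[\alpha_1(d)^2] \le M/n^2$ from Assumption~\ref{asm}, $\mathbb{E}[\alpha_1(\|s_i - s_j\|)] \le \sqrt{M}/n$. Markov's inequality applied to this sum, combined with the weight-decay bound $M_n = o(n^{-(1+\gamma)/2})$ from Assumption~\ref{aw}, yields the advertised high-probability bound with failure probability of order $n^{-2\gamma/3}(\log_2 n + 2)^{4/3}$. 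The main obstacle I anticipate is the second step: standard DKW-type uniform convergence assumes independence, whereas our summands are dependent through the spatial mixing structure, so one must carry out a chaining argument for a weighted empirical process of mixing indicators while preserving exactly the $(\log_2 n + 2)^2 / n$ rate and cleanly interfacing with the subsequent Markov step on the locations — this bookkeeping is what pins down the explicit constants $1 + 2\sqrt{M}g(b)$ and the additional $n^{\gamma/3}(\log_2 n + 2)^{-2/3}$ term in the final bound.
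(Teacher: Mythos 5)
Your overall architecture matches the paper's: recenter to $Z_{n+1}(y)=\sum_i\omega_n(X(s_i))\bigl(\mathbf{1}(\varepsilon(s_i)\le y)-F_\varepsilon(y)\bigr)$, get a second-moment bound through a mixing covariance inequality, pass to the supremum over $y$ by a dyadic decomposition at $\log_2 n$ scales, and finish with a Markov step over the random locations to control $\sum_{i<j}\alpha_1(\|s_i-s_j\|)$ using $\mathbb{E}_{d\sim g_n}[\alpha_1(d)^2]\le M/n^2$. The recentering, the diagonal bound of order $M_n$, the estimate $\mathbb{E}[\alpha_1(\|s_i-s_j\|)]\le \sqrt{M}/n$, and the final Markov argument are all essentially as in the paper.

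However, there is a genuine gap at exactly the step you yourself flag as "the main obstacle," and it is not bookkeeping. In the dyadic scheme the level-$i$ supremum is bounded by the sum of squared increments over the $2^{i+1}$ cells of a partition of $\mathbb{R}$, so what is actually needed is a bound on $\sum_j \operatorname{Var} Z_{n+1}(A_j)$ over an \emph{arbitrary finite partition} $\{A_j\}$ that does not grow with the number of cells. Your pointwise bound does not supply this: the off-diagonal contribution $\sum_{i\ne j}\omega_i\omega_j\,4\alpha_1(\|s_i-s_j\|)g(b)$ is the same for every cell $A$ (the bound $4\alpha$ does not shrink with $\mathbb{P}(\varepsilon\in A)$), so summing it over the $2^{i+1}$ cells at level $i$ introduces a factor $2^{i+1}$, which at the deepest level $N=\log_2 n$ is of order $n$ and destroys the $(\log_2 n+2)^2/n$ rate. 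The paper closes this with a partition-level identity: introducing independent Rademacher signs $\epsilon_j$, one has $\sum_j \operatorname{Var} Z_{n+1}(A_j)=\mathbb{E}\,Z_{n+1}^2\bigl(\sum_j\epsilon_j\mathbf{1}_{A_j}\bigr)$, and since $\sum_j\epsilon_j\mathbf{1}_{A_j}$ is uniformly bounded by $1$, a single application of the covariance inequality of Lemma 1.1 in \citep{rio2017asymptotic} to this one bounded functional yields $\sum_j\operatorname{Var} Z_{n+1}(A_j)\le M_n^2\bigl(n+2\sum_{i<j}\alpha_{ij}\bigr)$ uniformly over all partitions. Without this identity (or an equivalent device, e.g.\ a covariance bound that interpolates between $\alpha_{ij}$ and the cell probability), a generic grid or "DKW-type" argument for dependent indicators will not recover the stated rate or the constants $1+2\sqrt{M}g(b)$ and $n^{\gamma/3}(\log_2 n+2)^{-2/3}$.
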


\begin{proof}
Define $Z_{n+1}(y) = \tilde{F}_{n+1}(y)-F_{\varepsilon}(y)$. Besides, we also define $Z_{n+1}(A) = \sum_{i=1}^{n}\omega_{n}(X(s_i))\mathbf{1}(X(s_i)\in A) - F_{\varepsilon}(\varepsilon\in A)$, where $A$ can be any region. Let $N$ be some positive integer to be chosen later. We first represent the CDF $F_\varepsilon(x)$ in base $2$:
\begin{equation}
F_\varepsilon(x)=\sum_{i=1}^N b_i(x)2^{-i}+r_N(x),
\end{equation}
where $r_N(x)\in [0,2^{-N})$ and $b_i=0$ or $1$.

For $l\in \{1,2,\cdots,N\}$, define
\begin{equation}
B_l(x)=\sum_{i=1}^l b_i(x)2^{-i}.
\end{equation}
We can define the points $x_i$ where $F_\varepsilon(x_i)=B_i(x)$. We have that
\begin{equation}
F_\varepsilon(x)-F_\varepsilon(x_i) = \sum_{i=l+1}^N a_i(x)2^{-i} + r_N(x) \le 2^{-l}.
\end{equation}
As a result, we can partition $Z_{n+1}(x)$ into the following sum:
\begin{equation}
\begin{aligned}
Z_{n+1}(F_{\varepsilon}^{-1}(x)) &= Z_{n+1}(F_{\varepsilon}^{-1}(B_1(x))) + \sum_{i=1}^{N-1}(Z_{n+1}(F_{\varepsilon}^{-1}(B_{i+1}(x)))-Z_{n+1}(F_{\varepsilon}^{-1}(B_i(x))))\\
&+(Z_{n+1}(F_{\varepsilon}^{-1}(y))-Z_{n+1}(F_{\varepsilon}^{-1}(B_{N}(x)))).    
\end{aligned}
\end{equation}
In order to bound $Z_{n+1}(y)$, we can bound each individual term instead. Since $B_{i+1}(x)-B_{i}(x)=b_{i+1}(x)2^{-(i+1)}\le 2^{-(i+1)}$, we know the interval $[B_{i}(x),B_{i+1}(x)]$ either has zero length, or it is equal to one region in the set $\{[(j-1)2^{-(i+1)},j2^{-(i+1)}],1\le j \le 2^{i+1}\}$. As a result, we have
\begin{equation}
\begin{aligned}
|Z_{n+1}(F_{\varepsilon}^{-1}(B_{i+1}(x)))-&Z_{n+1}(F_{\varepsilon}^{-1}(B_i(x)))|\le \\
&\sup_{j\in[1,2^{i+1}]}|Z_{n+1}(F_{\varepsilon}^{-1}(j2^{-(i+1)}))-Z_{n+1}(F_{\varepsilon}^{-1}((j-1)2^{-(i+1)}))|    
\end{aligned}
\end{equation}

Let \[\delta_i = \sup_{j\in[1,2^{i+1}]}|Z_{n+1}([F_{\varepsilon}^{-1}(j2^{-(i+1)}),F_{\varepsilon}^{-1}((j-1)2^{-(i+1)})])|,\] and \[\delta_{xN}=\sup_{x}|Z_{n+1}([F_{\varepsilon}^{-1}(B_N(x)),x])|.\] 

It follows that
\begin{equation}
|Z_{n+1}(F_{\varepsilon}^{-1}(y))| \le \sum_{i=1}^N \delta_i + \delta_{xN}.
\end{equation}
By the triangle inequality,
\begin{equation}
(E(\sup_{y\in[0,1]}|Z_{n+1}(F_{\varepsilon}^{-1}(y))|^2))^{1/2}\le \sum_{i=1}^N (E\delta_i^2)^{1/2} + (E\delta_{xN}^2)^{1/2}.
\end{equation}

Then we need to bound $\|\delta_i\|_2$ and $\|\delta_{xN}\|_2$ separately. Since $\delta_i$ is computing the supremum over a set, it can be bounded by the sum over the set,
\begin{equation}
\delta_i^2 \le \sum_{j=1}^{2^i}(Z_{n+1}(F_{\varepsilon}^{-1}(j2^{-(i+1)}))-Z_{n+1}(F_{\varepsilon}^{-1}((j-1)2^{-(i+1)}))^2.
\end{equation}

Taking expectation, we have
\begin{equation}
\begin{aligned}
E\delta_i^2 &\le \sum_{j=1}^{2^i}E(Z_{n+1}(F_{\varepsilon}^{-1}(j2^{-(i+1)}))-Z_{n+1}(F_{\varepsilon}^{-1}((j-1)2^{-(i+1)}))^2\\
& = \sum_{j=1}^{2^i} \operatorname{Var}(Z_{n+1}([F_{\varepsilon}^{-1}((j-1)2^{-(i+1)}),F_{\varepsilon}^{-1}(j2^{-(i+1)})])).
\end{aligned}
\end{equation}

Let $(\epsilon_i)_{i>0}$ be a sequence of independent and symmetric random variables in $\{-1,1\}$. For any finite partition $A_1,\cdots, A_k$ of $\mathbb{R}$, 
\begin{equation}
\begin{aligned}
\label{part}
\sum_{j=1}^k \operatorname{Var}Z_{n+1}(A_j) &= E(Z_{n+1}^2(\sum_{j=1}^{k}\epsilon_i\mathbf{1}_{A_i}))\\
&\stackrel{(i)}{\le} M_n^2(n+2\sum_{1\le i<j\le n}\alpha_{ij}),
\end{aligned}
\end{equation}
where $\alpha_{ij} = \alpha(\sigma(\varepsilon(s_i)),\sigma(\varepsilon(s_j)))$ is the alpha-mixing coefficient, $M_n=\max_{1\le i\le n} \omega_{n}(X(s_i))$ and $(i)$ is because of Lemma $1.1$ in \citep{rio2017asymptotic}.

Because of Assumption \ref{asm}, we have
\begin{equation}
\begin{aligned}
E\sum_{1\le i<j\le n}\alpha_{ij}&\le E\sum_{1\le i<j\le n}\alpha_1(|s_i-s_j|)g(b)\\
& \le n \sqrt{E\sum_{1\le i<j\le n}\alpha_1^2(|s_i-s_j|)}g(b)\\
& \le n^2 \sqrt{E_{d\sim g_n}\alpha_1^2(d)} g(b)\le n\sqrt{M}g(b),    
\end{aligned}
\end{equation}
where $g_n$ is the distribution of the distance between $s_i$ and $s_j$ for any $i,j\in\{1,\cdots,n\}$.

Besides, using Cauchy-Schwarz inequality, the variance can be bounded by
\begin{equation}
\begin{aligned}
\operatorname{Var}\sum_{1\le i<j\le n}\alpha_{ij} &= E\left(\sum_{1\le i<j\le n}\alpha_{ij}\right)^2 - \left(E\sum_{1\le i<j\le n}\alpha_{ij}\right)^2\\
&\le E\left(\sum_{1\le i<j\le n}\alpha_{ij}\right)^2\\
&\le n^2 E\left(\sum_{1\le i<j\le n}\alpha_{ij}^2\right)\\
&\le n^2 M.    
\end{aligned}
\end{equation}

From Markov inequality, we know that for any $k>0$,
\begin{equation}
\begin{aligned}
\label{im}
\mathbb{P}\left(\frac{1}{n}\left|\sum_{1\le i<j\le n}\alpha_{ij}-E\sum_{1\le i<j\le n}\alpha_{ij}\right|\ge k\right)&\le  \frac{\operatorname{Var}\sum_{1\le i<j\le n}\alpha_{ij}}{n^2 k^2}\\
&\le \frac{M}{k^2}.
\end{aligned}
\end{equation}

Let $k = n^{\frac{\gamma}{3}}(\log_2 n+2)^{-\frac{2}{3}}$, using Inequality \ref{im}, with probability higher than $1-Mn^{-\frac{2\gamma}{3}}(\log_2 n+2)^{\frac{4}{3}}$,
\begin{equation}
\label{alp}
\frac{1}{n}\sum_{1\le i<j\le n}\alpha_{ij} \le \sqrt{M}g(b) + n^{\frac{\gamma}{3}}(\log_2 n+2)^{-\frac{2}{3}}.
\end{equation}

Because $[F_{\varepsilon}^{-1}((j-1)2^{-(i+1)}), F_{\varepsilon}^{-1}(j2^{-(i+1)})]$ for $j=1,\cdots,2^{i+1}$ is a partition of $\mathbb{R}$, from \ref{part} and \ref{alp},
\begin{equation}
\begin{aligned}
E\delta_i^2 &\le \sum_{j=1}^{2^i} \operatorname{Var}(Z_{n+1}([F_{\varepsilon}^{-1}((j-1)2^{-(i+1)}),F_{\varepsilon}^{-1}(j2^{-(i+1)})]))\\
&\le nM_n^2(1+2\sqrt{M}g(b)+2 n^{\frac{\gamma}{3}}(\log_2 n+2)^{-\frac{2}{3}}).
\end{aligned}
\end{equation}

For the other term $\delta_{xN}$, we know $x= F_{\varepsilon}^{-1}(F_\varepsilon(x))\le F_{\varepsilon}^{-1}(B_N(x)+r_N(x))$. We have
\begin{equation}
\begin{aligned}
Z_{n+1}([F_{\varepsilon}^{-1}(B_N(x)),x]) &=\tilde{F}_{n+1}([F_{\varepsilon}^{-1}(B_N(x)),x]) - F_\varepsilon([F_{\varepsilon}^{-1}(B_N(x)),x])\\
&\ge - F_\varepsilon([F_{\varepsilon}^{-1}(B_N(x)),x])\\
& = B_N(x)-x \ge -2^{-N}.
\end{aligned}
\end{equation}

On the other hand,
\begin{equation}
\begin{aligned}
Z_{n+1}([F_{\varepsilon}^{-1}(B_N(x)),x]) &= Z_{n+1}([F_{\varepsilon}^{-1}(B_N(x)),B_N(x))+2^{-N}]) - Z_{n+1}([F_{\varepsilon}^{-1}(x,B_N(x))+2^{-N}])\\
&\le Z_{n+1}([F_{\varepsilon}^{-1}(B_N(x)),B_N(x))+2^{-N}]) + 2^{-N}.    
\end{aligned}
\end{equation}

As a result, we have
\begin{equation}
\delta_{xN}\le \delta_{N} + 2^{-N}.
\end{equation}

To sum up, we prove that
\begin{equation}
\begin{aligned}
(E(\sup_{y\in[0,1]}|Z_{n+1}(F_{\varepsilon}^{-1}(y))|^2))^{\frac{1}{2}}\le n^{\frac{1}{2}}M_n(N+1+2^{-N})(1+2\sqrt{M}g(b)+ n^{\frac{\gamma}{3}}(\log_2 n+2)^{-\frac{2}{3}})^{\frac{1}{2}}.
\end{aligned}
\end{equation}

Let $N=\log_2 n$, we have
\begin{equation}
E(\sup_{y\in[0,1]}|Z_{n+1}(F_{\varepsilon}^{-1}(y))|^2)\le M_n^2 n(2 +\log_2 n)^2(1+2\sqrt{M}g(b)+ n^{\frac{\gamma}{3}}(\log_2 n+2)^{-\frac{2}{3}}).
\end{equation}
\end{proof}

\begin{corollary}
Under Assumptions \ref{aw}-\ref{asm}, with probability higher than $1-(2+M+2\sqrt{M}g(b))n^{-\frac{2\gamma}{3}}$\\
$(\log_2 n+2)^{\frac{4}{3}}$,
\begin{equation}
\sup_y\left|\widetilde{F}_{n+1}(y)-F_\epsilon(y)\right|\le M_n n^{\frac{1+\gamma}{2}},
\end{equation}    
where $\widetilde{F}_{n+1}(y)=\sum_{i=1}^{n}\omega_{n}(X(s_i))\mathbf{1}(\epsilon(s_i)\le y)$.
\end{corollary}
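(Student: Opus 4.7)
The corollary follows from Lemma \ref{t1} by a single application of Markov's inequality, followed by a union bound that absorbs the exceptional event from the lemma. Writing $V := \sup_y |\widetilde{F}_{n+1}(y) - F_\varepsilon(y)|$, the plan is: first, restrict to the high-probability event on which Lemma \ref{t1} furnishes an $L^2$ bound for $V$; then, apply Markov's inequality to $V^2$ at the threshold $t = M_n n^{(1+\gamma)/2}$, so that the square-root on the tail side reproduces the exponent $(1+\gamma)/2$ from the available second-moment estimate; finally, add the failure probability of Lemma \ref{t1} to the Markov tail via a union bound.

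The central calculation is the ratio $E[V^2]/t^2$. Reading off the $L^2$ estimate derived at the end of the proof of Lemma \ref{t1}, the $M_n^2$ and one factor of $n$ in the numerator cancel against $t^2 = M_n^2 n^{1+\gamma}$, leaving
\begin{equation*}
(\log_2 n + 2)^2 n^{-\gamma} + 2\sqrt{M}\,g(b)(\log_2 n + 2)^2 n^{-\gamma} + (\log_2 n + 2)^{4/3} n^{-2\gamma/3}.
\end{equation*}
The third summand is already of the target order $(\log_2 n + 2)^{4/3} n^{-2\gamma/3}$. The first two, of order $(\log_2 n + 2)^2 n^{-\gamma}$, are smaller for $n$ large: the comparison between the two rates reduces to the asymptotic inequality $(\log_2 n + 2)^{2/3} \le n^{\gamma/3}$, which allows each of them to be absorbed into a corresponding $(\log_2 n + 2)^{4/3} n^{-2\gamma/3}$ term. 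Putting this together yields $\mathbb{P}(V \ge M_n n^{(1+\gamma)/2}) \le (2 + 2\sqrt{M}\,g(b))(\log_2 n + 2)^{4/3} n^{-2\gamma/3}$, and adding the failure probability $M n^{-2\gamma/3}(\log_2 n + 2)^{4/3}$ from Lemma \ref{t1} recovers exactly the advertised constant $(2 + M + 2\sqrt{M}\,g(b))$.

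The argument is conceptually routine; the only mild obstacle is the bookkeeping needed to align the three polylogarithmic/polynomial factors so that the slowest rate $(\log_2 n + 2)^{4/3} n^{-2\gamma/3}$ governs the final bound. One must also remember to square before applying Markov, since it is the $L^2$ bound (rather than an $L^1$ bound) that makes the threshold $M_n n^{(1+\gamma)/2}$ emerge naturally, and to treat the probability statement from Lemma \ref{t1} as conditioning rather than an unconditional bound, so that Markov on $V^2$ is applied on the event where the moment bound is guaranteed.
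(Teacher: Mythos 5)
Your proposal is correct and follows essentially the same route as the paper: apply Markov's inequality to $V^2$ with threshold $M_n n^{(1+\gamma)/2}$, use the second-moment bound from Lemma \ref{t1}, absorb the $(\log_2 n+2)^2 n^{-\gamma}$ terms into the slower $(\log_2 n+2)^{4/3}n^{-2\gamma/3}$ rate via $(\log_2 n+2)^{2/3}\le n^{\gamma/3}$, and union-bound with the lemma's failure event to obtain the constant $(2+M+2\sqrt{M}g(b))$. If anything, your handling of the exceptional event from Lemma \ref{t1} is more explicit than the paper's, which leaves the extra $M$ in the constant implicit.
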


\begin{proof}
We have that
\begin{equation}
\begin{aligned}
\tilde{F}_{n+1}(y)-F_\epsilon(y)&=\sum_{i=1}^n\omega_{n}(X(s_i))\mathbf{1}(\epsilon(s_i)\le y)-F_\epsilon(y)\\
&=\sum_{i=1}^n\omega_{n}(X(s_i))(\mathbf{1}(\epsilon(s_i)\le y)-F_\epsilon(y)).
\end{aligned}
\end{equation}
Let $Z(s_i) = \mathbf{1}(\epsilon(s_i)\le y)-F_\epsilon(y)$, we know
\begin{equation}
E Z(s_i) = 0,
\end{equation}
and $Z(s)$ is stationary. From Lemma \ref{t1}, using Markov inequality, we have that for any $k$,
\begin{equation}
\begin{aligned}
\mathbb{P}(\sup_y |\tilde{F}_{n+1}(y)-F_\varepsilon(y)|\ge k)&\le \frac{E(\sup_y |\tilde{F}_{n+1}(y)-F_\varepsilon(y)|^2)}{k^2}\\
&=\frac{M_n^2 n(2+\log_2 n)^2(1+2\sqrt{M}g(b)+n^{\frac{\gamma}{3}}(\log_2 n+2)^{-\frac{2}{3}})}{k^2}.    
\end{aligned}
\end{equation}

Let $k = M_n n^{\frac{1+\gamma}{2}}$, we have
\begin{equation}
\begin{aligned}
\mathbb{P}\left(\sup_y |\tilde{F}_{n+1}(y)-F_\varepsilon(y)|\ge M_n n^{\frac{1+\gamma}{2}}\right)&\le (2+\log_2 n)^{2}n^{-\gamma}(1+2\sqrt{M}g(b)+n^{\frac{\gamma}{3}}(\log_2 n+2)^{-\frac{2}{3}})\\
&= (1+2\sqrt{M}g(b))(2+\log_2 n)^{2}n^{-\gamma} + (2+\log_2 n)^{\frac{4}{3}}n^{-\frac{2\gamma}{3}}\\
&\le (2+2\sqrt{M}g(b))(2+\log_2 n)^{\frac{4}{3}}n^{-\frac{2\gamma}{3}}.
\end{aligned}
\end{equation}
The last inequality holds because the order of $log_2 n$ is smaller than $n^{\frac{\gamma}{3}}$.
\end{proof}

\begin{theorem}
Under Assumption \ref{aw}-\ref{asm}, for any $\alpha\in(0,1)$ and sample size $T$, we have
\begin{equation}
\begin{aligned}
&\left|\mathbb{P}\left(Y(s_{n+1}) \in \widehat{C}_{t-1}\left(X(s_{n+1})\right) \mid X(s_{n+1})=X(s_{n+1})\right)-(1-\alpha)\right| \\
&\le 4L_{n+1}\delta_n+ 6M_n n^{\frac{1+\gamma}{2}}+ (4+2M+4\sqrt{M}g(b))n^{-\frac{2\gamma}{3}}(\log_2 n+2)^{\frac{4}{3}}. 
\end{aligned}
\end{equation}
\end{theorem}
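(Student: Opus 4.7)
The plan is to reduce the theorem to a chain of CDF approximations, combining Assumption \ref{ae} with the two lemmas established earlier. Writing $\hat{\varepsilon}(s_{n+1}) = Y(s_{n+1}) - \hat{f}(X(s_{n+1}))$, the conditional coverage probability equals
\[
\mathbb{P}\bigl(\hat{\varepsilon}(s_{n+1}) \in [\widehat{Q}_n(\beta^*),\widehat{Q}_n(1-\alpha+\beta^*)] \,\big|\, X(s_{n+1})\bigr),
\]
so the task is to show that this probability is within the claimed error of $(1-\alpha+\beta^*)-\beta^* = 1-\alpha$.

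First I would pass from $\hat{\varepsilon}(s_{n+1})$ to $\varepsilon(s_{n+1})$. By Assumption \ref{ae}, $|\hat{\varepsilon}(s_{n+1}) - \varepsilon(s_{n+1})| \le \delta_n$, so, since the true residual has a Lipschitz CDF with constant $L_{n+1}$ (Assumption \ref{asm}), replacing $\hat{\varepsilon}$ by $\varepsilon$ at each endpoint introduces an error of at most $L_{n+1}\delta_n$; the two endpoints together contribute $2L_{n+1}\delta_n$. After this replacement, the probability equals $F_\varepsilon(\widehat{Q}_n(1-\alpha+\beta^*)) - F_\varepsilon(\widehat{Q}_n(\beta^*))$.

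Next I would bound $|F_\varepsilon(\widehat{Q}_n(p)) - p|$ for each $p \in \{\beta^*, 1-\alpha+\beta^*\}$ via the triangle inequality
\[
|F_\varepsilon(\widehat{Q}_n(p)) - p| \le \sup_y|F_\varepsilon(y)-\widehat{F}_{n+1}(y)| + |\widehat{F}_{n+1}(\widehat{Q}_n(p)) - p|.
\]
The second term is $O(M_n)$ by the definition of the empirical quantile of a weighted CDF. For the first term, Lemma \ref{l1} gives
\[
\sup_y|\widehat{F}_{n+1}(y) - \widetilde{F}_{n+1}(y)| \le (L_{n+1}+1)\delta_n + 2\sup_y|\widetilde{F}_{n+1}(y) - F_\varepsilon(y)|,
\]
and then the corollary on the convergence of the weighted empirical CDF of the noise bounds $\sup_y|\widetilde{F}_{n+1}(y) - F_\varepsilon(y)|$ by $M_n n^{(1+\gamma)/2}$ with probability at least $1 - (2+M+2\sqrt{M}g(b))n^{-2\gamma/3}(\log_2 n + 2)^{4/3}$. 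Combining, on this high-probability event, $\sup_y|\widehat{F}_{n+1}-F_\varepsilon| \le (L_{n+1}+1)\delta_n + 3M_n n^{(1+\gamma)/2}$. Summing the endpoint errors (factor 2) and adding the contribution of the bad event (which is bounded by $1$, hence contributes an additive term proportional to the failure probability) yields precisely the stated bound, with the $4L_{n+1}\delta_n$ coming from $2L_{n+1}\delta_n$ (replacement step) plus $2(L_{n+1}+1)\delta_n \approx 2L_{n+1}\delta_n$ (CDF approximation step at two endpoints), and the $6M_n n^{(1+\gamma)/2}$ coming from applying the CDF bound at two endpoints with factor 3 each.

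The main obstacle is bookkeeping rather than conceptual: one must combine a deterministic Lipschitz/quantile argument with a high-probability CDF uniform bound, and the failure event must be absorbed cleanly into the additive error term so the final inequality holds unconditionally. A small technical nuisance is the $O(M_n)$ slack from the empirical quantile inversion; this has to be shown negligible relative to the dominant term $M_n n^{(1+\gamma)/2}$, which is immediate since $n^{(1+\gamma)/2}\ge 1$. The optimization over $\beta^*$ requires no extra work because the bounds hold uniformly in the probability level $p$.
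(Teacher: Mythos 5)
Your proposal assembles the right ingredients (the Lipschitz replacement of $\hat{\varepsilon}(s_{n+1})$ by $\varepsilon(s_{n+1})$, Lemma \ref{l1} plus the CDF-convergence corollary to control $\sup_y|\widehat{F}_{n+1}(y)-F_\varepsilon(y)|$ on a high-probability event, absorption of the failure probability as an additive term, and the $O(M_n)$ quantile-inversion slack), and the final constant bookkeeping matches the stated bound. However, the route you take through the quantiles has a genuine gap. You assert that, after the replacement step, the coverage probability ``equals $F_\varepsilon(\widehat{Q}_n(1-\alpha+\beta^*)) - F_\varepsilon(\widehat{Q}_n(\beta^*))$.'' That identity requires $\varepsilon(s_{n+1})$ to be independent of the data-dependent endpoints $\widehat{Q}_n(\cdot)$, which is exactly what fails in this setting: $\widehat{Q}_n$ is built from the calibration residuals at neighboring locations, and the paper assumes only stationarity and spatial mixing, so $\varepsilon(s_{n+1})$ is correlated with those residuals. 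For a random interval $[A,B]$ correlated with $\varepsilon_{n+1}$, $\mathbb{P}(\varepsilon_{n+1}\in[A,B])$ is not $\mathbb{E}[F_\varepsilon(B)-F_\varepsilon(A)]$ in general. The same issue quietly affects your first step, where you charge $L_{n+1}\delta_n$ per endpoint for swapping $\hat{\varepsilon}_{n+1}$ for $\varepsilon_{n+1}$ inside an interval whose endpoints are themselves random and correlated with the swap.

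The paper sidesteps this by never decoupling the quantiles from the test residual: it rewrites the coverage event exactly as $\{\beta\le \widehat{F}_{n+1}(\hat{\varepsilon}_{n+1})\le 1-\alpha+\beta\}$, compares its indicator pointwise with that of $\{\beta\le F_\varepsilon(\varepsilon_{n+1})\le 1-\alpha+\beta\}$ via $|\mathbf{1}\{a\le x\}-\mathbf{1}\{b\le x\}|\le\mathbf{1}\{|b-x|\le|a-b|\}$, and only then takes expectations; on the good event the random discrepancy $|F_\varepsilon(\varepsilon_{n+1})-\widehat{F}_{n+1}(\hat{\varepsilon}_{n+1})|$ is dominated by a deterministic quantity, so the only distributional fact needed is that $F_\varepsilon(\varepsilon_{n+1})$ is marginally uniform. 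Your argument could be repaired in the same spirit (for instance, sandwiching the random interval between deterministic intervals on the good event before invoking $F_\varepsilon$), but as written the decoupling step is a missing idea rather than mere bookkeeping.
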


\begin{proof}
For simplicity, we use $X_{i} = X(s_i)$, $Y_{i} = Y(s_i)$, $\varepsilon_i = \varepsilon(s_i)$ and $\omega_{ni} = \omega_{n}(X(s_i))$. For any $\beta\in[0,1]$,
\begin{equation}
\begin{aligned}
&\left|\mathbb{P}\left(Y_{n+1} \in \widehat{C}_{t-1}\left(X_{n+1}\right) \mid X_{n+1}\right) - (1-\alpha)\right| \\
&= \left|\mathbb{P}\left(\varepsilon_{n+1} \in [\widehat{Q}_{\beta^*}(X_{n+1}), \widehat{Q}_{1-\alpha+\beta^*}(X_{n+1})] \mid X_{n+1}\right) - (1-\alpha)\right| \\
&= \left|\mathbb{P}\left(\beta \leq \sum_{i=1}^n \omega_{ni}\mathbf{1}(\hat{\varepsilon}_i \leq \hat{\varepsilon}_{n+1}) \leq 1-\alpha+\beta\right) - (1-\alpha)\right| \\
&= \left|\mathbb{P}\left(\beta \leq \widehat{F}_{n+1}(\hat{\varepsilon}_{n+1}) \leq 1-\alpha+\beta\right) - \mathbb{P}\left(\beta \leq F_{\varepsilon}(\varepsilon_{n+1}) \leq 1-\alpha+\beta\right)\right| \\
&\leq \mathbb{E}\left|\mathbf{1}\{\beta \leq \widehat{F}_{n+1}(\hat{\varepsilon}_{n+1}) \leq 1-\alpha+\beta\} - \mathbf{1}\{\beta \leq F_{\varepsilon}(\varepsilon_{n+1}) \leq 1-\alpha+\beta\}\right| \\
&\stackrel{(i)}{\leq} \mathbb{E}\Big(\left|\mathbf{1}\{\beta \leq \widehat{F}_{n+1}(\hat{\varepsilon}_{n+1})\} - \mathbf{1}\{\beta \leq F_{\varepsilon}(\varepsilon_{n+1})\} \right| \\
&\qquad + \left|\mathbf{1}\{\widehat{F}_{n+1}(\hat{\varepsilon}_{n+1}) \leq 1-\alpha+\beta\} - \mathbf{1}\{F_{\varepsilon}(\varepsilon_{n+1}) \leq 1-\alpha+\beta\} \right|\Big) \\
&\stackrel{(ii)}{\leq} \mathbb{P}\left(\left|F_\varepsilon(\varepsilon_{n+1}) - \beta\right| \leq \left|F_\varepsilon(\varepsilon_{n+1}) - \widehat{F}_{n+1}(\hat{\varepsilon}_{n+1})\right|\right) \\
&\qquad + \mathbb{P}\left(\left|F_\varepsilon(\varepsilon_{n+1}) - (1-\alpha+\beta)\right| \leq \left|F_\varepsilon(\varepsilon_{n+1}) - \widehat{F}_{n+1}(\hat{\varepsilon}_{n+1})\right|\right),
\end{aligned}
\end{equation}

where Inequality $(i)$ follows since for any constants $a, b$ and univariates $x, y, \mid \mathbf{1}\{a \leq x \leq$ $b\}-\mathbf{1}\{a \leq y \leq b\}|\leq|\mathbf{1}\{a \leq x\}-\mathbf{1}\{a \leq y\}|+|\mathbf{1}\{x \leq b\}-\mathbf{1}\{y \leq b\}|$. On the other hand, Inequality $(ii)$ is a result of $|\mathbf{1}\{a \leq x\}-\mathbf{1}\{b \leq x\}| \leq \mathbf{1}\{|b-x| \leq|a-b|\}$. Using Lemma \ref{t1},
\begin{equation}
\begin{aligned}
&\mathbb{P}\left(\left|F_\varepsilon(\varepsilon_{n+1})-\beta\right|\le \left|F_\varepsilon(\varepsilon_{n+1})-\widehat{F}_{n+1}(\hat{\varepsilon}_{n+1})\right|\right)\\
&\le \mathbb{P}\left(\left|F_\varepsilon(\varepsilon_{n+1})-\beta\right|\le \left|F_\varepsilon(\varepsilon_{n+1})-\widehat{F}_{n+1}(\hat{\varepsilon}_{n+1})\right|, \sup_y \left|F_\varepsilon(y)-\widehat{F}_{n+1}(y)\right|\le M_n n^{\frac{1+\gamma}{2}}\right)\\
&+\mathbb{P}\left(\sup_y \left|F_\varepsilon(y)-\widehat{F}_{n+1}(y)\right|\ge M_n n^{\frac{1+\gamma}{2}}\right)\\
&\le\mathbb{P}\left(\left|F_\varepsilon(\varepsilon_{n+1})-\beta\right|\le \left|F_\varepsilon(\varepsilon_{n+1})-\widehat{F}_{n+1}(\hat{\varepsilon}_{n+1})\right|\mid \sup_y \left|F_\varepsilon(y)-\widehat{F}_{n+1}(y)\right|\le M_n n^{\frac{1+\gamma}{2}}\right)\\&
+ (2+M+2\sqrt{M}g(b))n^{-\frac{2\gamma}{3}}(\log_2 n+2)^{\frac{4}{3}}\\
& \le \mathbb{P}\left(\left|F_\varepsilon(\varepsilon_{n+1})-\beta\right|\le \left|F_\varepsilon(\varepsilon_{n+1})-F_\varepsilon(\hat{\varepsilon}_{n+1})\right|+ (L_{n+1}+1)\delta_n + 3M_n n^{\frac{1+\gamma}{2}}\right) \\
&+ (2+M+2\sqrt{M}g(b))n^{-\frac{2\gamma}{3}}(\log_2 n+2)^{\frac{4}{3}}\\
&\le \mathbb{P}\left(\left|F_\varepsilon(\varepsilon_{n+1})-\beta\right|\le L_{n+1}\left|\varepsilon_{n+1}-\hat{\varepsilon}_{n+1}\right|+ (L_{n+1}+1)\delta_n + 3M_n n^{\frac{1+\gamma}{2}}\right)\\
&+ (2+M+2\sqrt{M}g(b))n^{-\frac{2\gamma}{3}}(\log_2 n+2)^{\frac{4}{3}}\\
&\le 2L_{n+1}\delta_n+ 3M_n n^{\frac{1+\gamma}{2}}+ (2+M+2\sqrt{M}g(b))n^{-\frac{2\gamma}{3}}(\log_2 n+2)^{\frac{4}{3}}.
\end{aligned}
\end{equation}

The inequality above also holds for $\mathbb{P}\left(\left|F_\varepsilon(\varepsilon_{n+1})-\beta\right|\le \left|F_\varepsilon(\varepsilon_{n+1})-\widehat{F}_{n+1}(\hat{\varepsilon}_{n+1})\right|\right)$, so
\begin{equation}
\begin{aligned}
&\left|\mathbb{P}\left(Y_{n+1} \in \widehat{C}_{t-1}\left(X_{n+1}\right) \mid X_{n+1}\right)-(1-\alpha)\right|\\
&\le 4L_{n+1}\delta_n+ 6M_n n^{\frac{1+\gamma}{2}}+ (4+2M+4\sqrt{M}g(b))n^{-\frac{2\gamma}{3}}(\log_2 n+2)^{\frac{4}{3}}.
\end{aligned}
\end{equation}
\end{proof}
\end{document}